\documentclass{article}
\usepackage[table]{xcolor}
\usepackage[dvipsnames]{xcolor}
\usepackage{subcaption}
\usepackage{enumitem}
\usepackage{verbatim}
\usepackage{amsthm}
\newtheorem{proposition}{Proposition}
\usepackage{wrapfig}
\usepackage{subcaption}
\usepackage{float}
 \usepackage[preprint]{neurips_2026}

\usepackage[utf8]{inputenc} 
\usepackage[T1]{fontenc}    
\usepackage{url}            
\usepackage{booktabs}       
\usepackage{amsfonts}       
\usepackage{nicefrac}       
\usepackage{microtype}      
\usepackage{xcolor}         
\usepackage{multirow}
\usepackage{graphicx}
\usepackage{amsmath}

\usepackage[colorlinks=true, filecolor=blue, linkcolor=blue, urlcolor=blue, citecolor=blue]{hyperref}
\title{bViT: Investigating Single-Block Recurrence in Vision Transformers for Image Recognition}

\usepackage{cleveref} 

\author{%
\small
\begin{tabular}{c}
  Michal Byra$^{1,2}$ \quad Pawel Olszowiec$^{1}$ \quad Grzegorz Stefanski$^{1}$ \\
  Grzegorz Gruszczynski$^{1}$ \quad Alberto Presta$^{1}$ \\
  \normalfont $^1$ Samsung AI Center, Warsaw, Poland \\
  \normalfont $^2$ Institute of Fundamental Technological Research, \\
  \normalfont Polish Academy of Sciences, Warsaw, Poland \\
  \normalfont \texttt{byra.michal@gmail.com}
\end{tabular}
}

\begin{document}

\maketitle

\begin{abstract}

Vision Transformers (ViTs) are built by stacking independently parameterized blocks, but it remains unclear how much of this depth requires layer specific transformations and how much can be realized through recurrent computation. We study this question with bViT, a single-block recurrent ViT in which one transformer block is applied repeatedly to process an image. This architecture preserves the iterative structure of a deep ViT while removing layer specific block parameterization, providing a controlled setting for studying recurrence in vision. On ImageNet-1K, a 12-step bViT-B achieves accuracy comparable to standard ViT-B under the same training recipe and computational budget, while using an order of magnitude fewer parameters. We observe that recurrent performance improves with representation width, with wider bViTs recovering much more of the performance of standard ViTs than narrow variants. We interpret this behavior as implicit depth multiplexing, where a shared block expresses multiple step-dependent computations through the evolving hidden state. Beyond ImageNet classification, bViT transfers competitively to downstream tasks and enables parameter-efficient fine-tuning. Mechanistic analyses of activations, attention and step-specific pruning show that the shared block changes its effective behavior across recurrent steps rather than simply repeating the same computation. Our results suggest that a large fraction of ViT depth can be implemented through recurrent reuse, provided that the representation space is sufficiently wide.

\end{abstract}

\section{Introduction}

Transformers have become the central architecture in modern deep learning, achieving strong performance across language, vision and multimodal tasks \cite{radford2021learningtransferablevisualmodels,vaswani2017attention,xu2023multimodal}. In computer vision, Vision Transformers (\textbf{ViTs}) have proven particularly effective for image classification, transfer learning and  visual representation learning~\cite{caron2021emerging,dosovitskiy2020image,he2022masked,touvron2021training}. Standard ViTs process an image through a sequence of transformer blocks that have the same architectural form but independently learned parameters. This design is highly effective, but it also raises a basic question: how much of the performance of deep ViTs requires layer specific transformations and how much can be recovered by repeated refinement through a single shared block?

Transformer blocks across depth operate on compatible token representations, are connected by residual updates and may perform computations that are more related than their independent parameterization suggests~\cite{sun2025transformer}. Empirical studies further show that some transformer blocks can be merged~\cite{akiba2025evolutionary} or compressed with limited loss in performance~\cite{garg2025revealing,hao2025token}, suggesting that layer specific parameterization may contain redundancy. At the same time, the number of parameters in transformers grows directly with depth. Understanding how much transformer computation requires layer specific parameters is therefore important both for analyzing the role of depth in transformers and for reducing the memory cost of storing, deploying and adapting deep models. 

In this work, we study a single-block recurrent ViT, termed \textbf{bViT}, in which the entire transformer stack is replaced by a single block applied repeatedly across depth. This removes all layer specific block parameters while preserving the iterative computation of a deep ViT. Unlike prior parameter-sharing methods that retain at least some layer specific components or rely on distillation from pretrained models, we train this fully shared architecture from scratch and compare it directly with standard ViTs under matched training conditions. 

Our setting provides both a parameter-efficient architecture and a controlled lens for analyzing the role of depth in ViTs. Because bViT reuses the same attention heads and feed-forward network (\textbf{FFN}) at every recurrent step, their behavior can be tracked directly across the unrolled computation. Vision is a natural domain for this analysis because prior work has characterized how activation patterns evolve across depth, from simple texture like patterns to more structured visual features~\cite{ghiasi2022vision}. This provides an interpretive framework for asking whether fixed components in a recurrent ViT change their effective role as the hidden state evolves. In standard ViTs, this question is harder to pose directly, because units from different blocks are independently parameterized and need not be aligned~\cite{theusgeneralized}.  The recurrent formulation also changes where depth-specific behavior can reside. A standard ViT can distribute different stages of computation across separate blocks, whereas bViT must encode multiple effective stages within one shared set of weights and the evolving hidden state. 

Our main contributions are as follows:
\begin{itemize}[leftmargin=*, itemsep=2pt, topsep=2pt]

\item We study the extreme full-sharing limit of ViT depth, replacing a stack of independently parameterized transformer blocks with a single block applied recurrently, termed bViT. On ImageNet-1K, this retains most of the performance of standard ViTs while reducing transformer block parameters by roughly an order of magnitude.

\item We identify embedding dimension as a key factor in recurrent depth sharing. Wider bViTs remain effective, whereas narrow recurrent models degrade substantially. We interpret this dependence as implicit depth multiplexing, where a shared block must encode multiple step-dependent computations within one parameter set.

\item We show that bViT provides a controlled setting for mechanistic analysis of ViT depth. By analyzing activations, attention patterns, and step-specific pruning, we find that the shared block changes its effective behavior across recurrent steps rather than simply repeating the same computation.

\item We evaluate the transfer learning capabilities of bViT on downstream vision tasks and show that it performs competitively while requiring an order of magnitude fewer parameters for adaptation than a standard ViT.

\end{itemize}
\section{Related work}

\paragraph{Recurrence and looping in transformers.}

Recurrence has long been studied as a mechanism for reusing computation across depth, enabling neural networks to trade layer specific parameters for iterations. In transformers, this idea appears in architectures such as the Universal Transformer~\cite{dehghani2018universal}, where the same block is applied recurrently across multiple steps, and in parameter sharing language models such as ALBERT~\cite{lan2019albert}. More recent looped transformer designs study multi-step latent computation, extrapolation on algorithmic tasks, in-context learning, and reasoning, suggesting that repeated application of a shared block can provide effective depth even under a limited parameter budget~\cite{giannou2023looped,saunshi2025reasoning,schwarzschild2021can,shu2026loopvit,yang2023looped}. This line of work also suggests that looping may impose an inductive bias toward iterative computation, rather than acting only as a compression mechanism~\cite{saunshi2025reasoning}.

Most of these studies focus on language modeling, synthetic reasoning, algorithmic tasks, or in-context learning. In contrast, we ask whether recurrence can replace layer specific depth parameterization in standard image recognition ViTs. Our setting differs in two respects: we train a fully shared single-block ViT from scratch  and we compare it to multi-block ViTs under matched training and computational budgets. This allows us to study how much ViT depth requires layer specific parameters, when recurrent reuse is sufficient and how fixed attention heads and FFN neurons change their effective visual behavior across recurrence steps.

\paragraph{Parameter sharing in vision transformers.}

Vision Transformers have become a standard architecture for image recognition. Their depth is usually implemented by stacking multiple transformer blocks with independent parameters, but several works have investigated whether this parameterization can be made more efficient through weight sharing. In MiniViT, transformer blocks share weights while retaining lightweight layer specific transformations to preserve diversity~\cite{zhang2022minivit}. GroupedMLP shares FFN weights between adjacent ViT blocks while keeping the attention weights block-specific~\cite{padmanaban2025parameter}. RViT combines a CNN feature extractor with a single transformer encoder block whose weights are shared across iterations~\cite{messina2022recurrent}. These works show that parameter sharing can be useful in ViTs, but they retain either layer specific components, partial sharing or hybrid designs.

Recurrent ViTs have also been studied in settings different from image recognition. LoopViT introduces a looped transformer for ARC-AGI, where recurrent computation is motivated by visual pattern completion and compositional rule execution~\cite{shu2026loopvit}. The closest work to ours is Raptor, which studies whether the computation of a pretrained ViT can be approximated by a small number of recurrently reused blocks, and constructs block-recurrent surrogate models through distillation~\cite{jacobs2025block}. In contrast, we train a single-block recurrent ViT from scratch and compare it directly with standard ViTs under matched training recipes and computational budgets. Our work therefore studies a more restrictive form of parameter sharing in which one standard transformer block is reused throughout the entire ViT computation. Beyond accuracy, we examine transferability, width dependence and step-specific behavior of shared model components.

\section{Methods} 

We first briefly recap the ViT architecture and then introduce our single-block recurrent model, termed bViT, see Fig.~\ref{fig:arch_1}. We also describe the main training setup used for image recognition experiments.

\subsection{Vision transformers for image recognition}

Given an input image, we partition it into $N$ non-overlapping patches and map them to $d$-dimensional token embeddings \cite{dosovitskiy2020image}. A learnable \textsc{cls} token is prepended to the patch sequence, and positional embeddings are added to retain spatial information. This yields the following input token sequence:
\begin{equation}
x^{0} = [x_{\mathrm{CLS}}, x_{1}, \dots, x_{N}] + P,
\end{equation}
where $x_{\mathrm{CLS}} \in \mathbb{R}^{d}$ denotes the \textsc{cls} token, $x_i \in \mathbb{R}^{d}$ is the embedding of the $i$-th patch token, and $P \in \mathbb{R}^{(N+1)\times d}$ denotes the positional embeddings. Next, the sequence is processed using a stack of $L$ transformer blocks as follows:
\begin{equation}
x^{\ell} = F_{\ell}(x^{\ell-1}), \qquad \ell = 1, \dots, L, \label{eq:transformer_stack_of_blocks}
\end{equation}
where each block $F_{\ell}$ has its own parameters. In our implementation, we utilize pre-norm transformer architecture, with the $\ell$-th block expressed as: 
\begin{align}
z^{\ell} &= x^{\ell-1} + \mathrm{MHSA}(\mathrm{Norm}(x^{\ell-1})), \\
x^{\ell} &= z^{\ell} + \mathrm{FFN}(\mathrm{Norm}(z^{\ell})),\label{eq:transformer_block_update}
\end{align}
where $\mathrm{MHSA}$ denotes multi-head self-attention, $\mathrm{FFN}$ is a two-layer feed-forward network with GELU activation, and $\mathrm{Norm}$ denotes RMS normalization~\cite{zhang2019root}. After the final block, the representation of the \textsc{cls} token is normalized and passed through a linear classifier to produce the output logits:
\begin{equation}
y = W\,\mathrm{Norm}(x^{L}_{\mathrm{CLS}}) + b,
\end{equation}
where  $W$ and $b$ are the classifier weight matrix and bias vector, respectively.

\subsection{Single-block Vision Transformer}

The single-block ViT studied in this work, termed bViT, differs from a standard ViT in how depth is parameterized. Instead of composing a sequence of independently parameterized blocks $F_1, \dots, F_L$, we apply a single shared block recurrently over multiple steps. Letting $F$ denote this shared block, the computation can be expressed as: 
\begin{equation}
\label{eq:iteraton}
x^{t} = F(x^{t-1}), \qquad t = 1, \dots, T,
\end{equation}

\begin{wrapfigure}{r}{7cm}
    \vspace{-4mm}
    \centering

    \includegraphics[width=7cm]{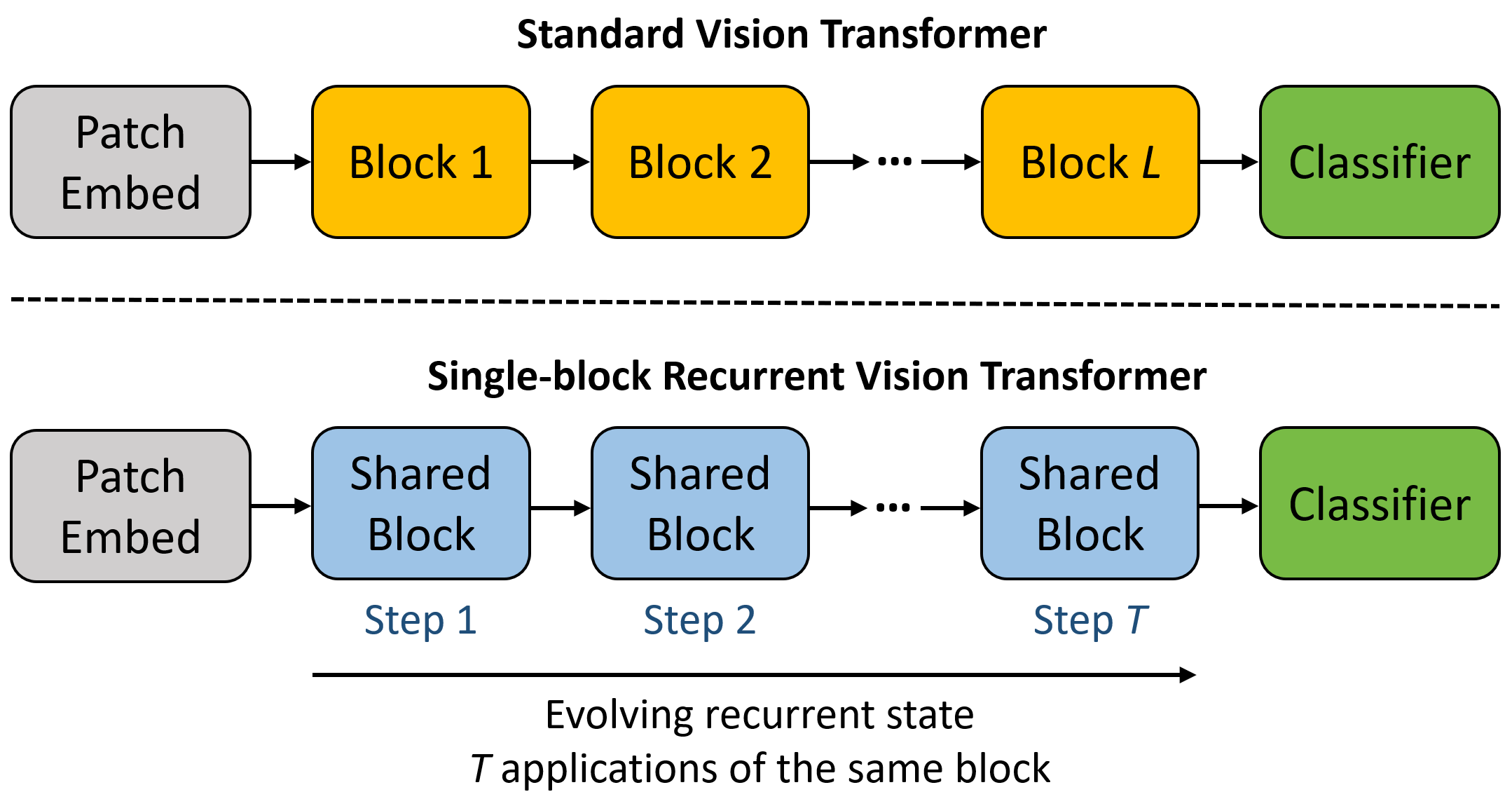}

\caption{We investigate a single-block vision transformer, bViT, in which the same transformer block is applied recurrently across depth. Reusing a single shared block improves parameter efficiency and enables mechanistic analysis by, for instance, allowing the same attention heads and FFN neurons to be tracked across recurrent steps.}
    \label{fig:arch_1}

    \vspace{-3mm}
\end{wrapfigure}

where $T$ is the number of recurrent time steps. This formulation can be viewed as an unrolled $T$-block ViT with full parameter sharing across depth. The block itself has the same form as in Eqs.~(3)--(4); only the parameterization differs. In particular, a standard ViT uses a different block at each depth, whereas bViT reuses the same block throughout the computations.

The recurrent formulation has several useful properties. First, by reusing the same transformer block across all steps, it substantially reduces the number of parameters while preserving the overall computational budget. Second, it corresponds to a conceptually simpler model, since depth is no longer implemented through a sequence of layer specific transformations, but through iterative application of a single operator. Third, the recurrent formulation encourages the model to reuse and refine a shared representation across steps, rather than allocating separate parameters to each depth. This may reduce redundancy in the parameterization and provides a cleaner setting for analyzing how computation evolves over the course of inference.

\subsection{Model training}

We perform image classification on  ImageNet-1K, which contains approximately 1.28 million training images and 50,000 validation images from 1000 classes~\cite{imagenet}. We trained ViT and bViT under the same training recipe to enable a controlled comparison between the standard and recurrent architectures. The recipe follows the established PyTorch ViT training implementation, with minor modifications for the recurrent setting~\cite{NEURIPS2019_9015}. In particular, we omit training components that introduce layer or step-specific stochasticity, such as stochastic depth and aggressive dropout, since these are not directly compatible with the fully shared recurrent block and would confound the comparison between independent depth and recurrent reuse.

We investigate three bViT versions, denoted bViT-S, bViT-B, and bViT-L, which match the corresponding ViT-S, ViT-B, and ViT-L configurations. For each variant, we set the number of recurrent steps equal to the number of transformer blocks in the corresponding  ViT. We train all transformers on ImageNet-1K for 300 epochs using images of size $224 \times 224$ and patch size 16. Optimization is performed with AdamW using a base learning rate of $5 \times 10^{-4}$ and weight decay of 0.05~\cite{loshchilov2017decoupled}. The learning rate is linearly warmed up for the first 30 epochs from a factor of 0.033 of the base learning rate and then annealed to zero using a cosine schedule. We optimize the cross-entropy loss with label smoothing of 0.1, and employ Mixup and CutMix with $\alpha=0.8$ and $\alpha=1.0$, respectively. Training is performed on 4 NVIDIA H100 GPUs with a per-GPU batch size of 256. We further maintain an exponential moving average of the model parameters with decay 0.99995. For data augmentation, we use RandomResizedCrop to $224 \times 224$ with scale $(0.08, 1.0)$ and aspect ratio $(3/4, 4/3)$, followed by random horizontal flipping, RandAugment, and Random Erasing. Images are normalized using the standard ImageNet mean and standard deviation. Models are trained in PyTorch~\cite{NEURIPS2019_9015}.

\section{Experiments} 

\subsection{ImageNet-1K}

\subsubsection{Classification performance}
\label{sec:imagenet_results}

Table \ref{tab:vit_imagenet} compares standard ViTs and bViTs on ImageNet-1K under matched computational budgets and training recipes. This comparison isolates the effect of replacing layer specific depth with recurrent computation, maintaining the same analogue computational costs.  Our recurrent formulation recovers most of the performance of examined standard ViTs. For example,  bViT-B achieves 0.779 validation accuracy compared to 0.789 for ViT-B, while using 8.6M rather than 86.6M parameters. Similarly, bViT-L achieves 0.805 compared to 0.808 for ViT-L, with 14.6M rather than 304.3M parameters. These results show that a single recurrent visual block can reproduce much of the computation normally distributed across independently parameterized transformer blocks. The comparison with bViT-S further shows that this behavior is scale-dependent. With an embedding dimension of 384, bViT-S reaches 0.681 accuracy compared to 0.782 for ViT-S, suggesting that sufficient representation width is important for recurrent visual computation.
\begin{wrapfigure}[27]{r}{7cm}
    \vspace{-0mm}
    \centering

    \begin{subfigure}[b]{0.8\linewidth}
        \centering
        \includegraphics[width=\linewidth]{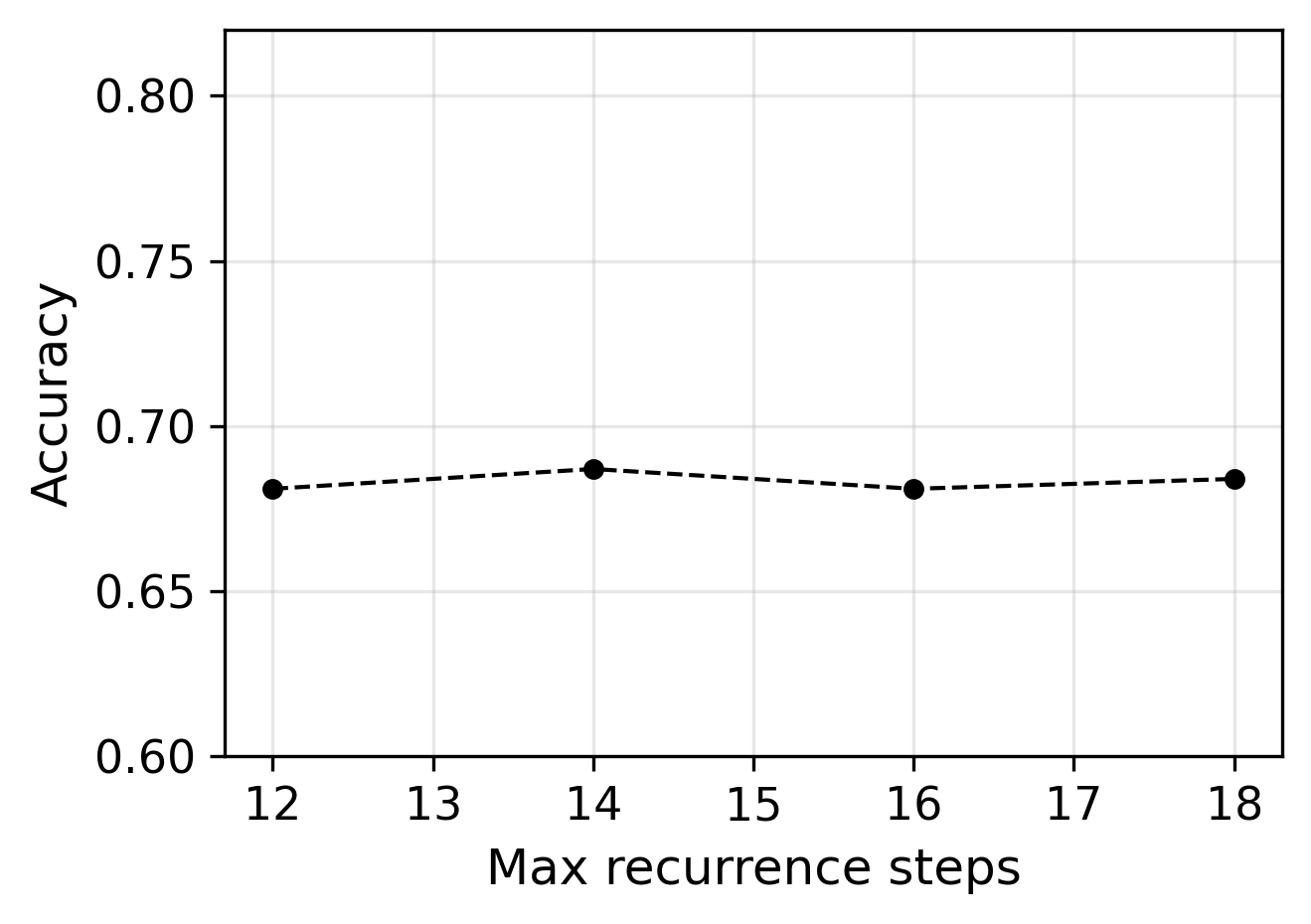}
        \caption{Accuracy vs maximum steps.}
        \label{fig:steps_small}
    \end{subfigure}

    \vspace{1mm}

    \begin{subfigure}[b]{0.8\linewidth}
        \centering
        \includegraphics[width=\linewidth]{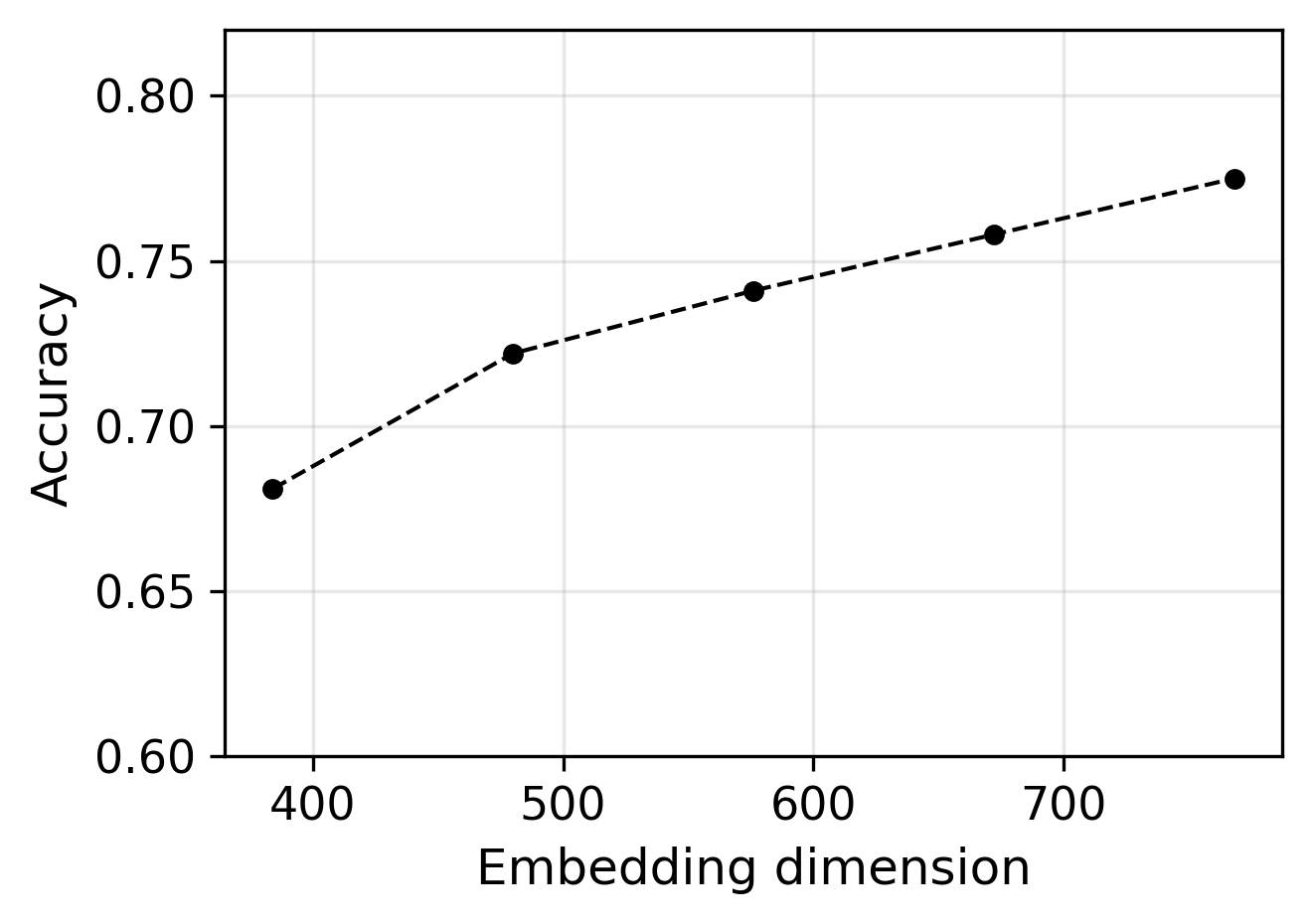}
        \caption{Accuracy vs embedding dimension.}
        \label{fig:embedding_small}
    \end{subfigure}

    \caption{Increasing recurrence beyond 12 steps does not improve bViT-S ImageNet-1K accuracy, while larger embeddings improve performance.}
    \label{fig:ablation}
    \vspace{-2mm}
\end{wrapfigure}

\begin{table}[]
    \centering
    \caption{Comparison between the regular ViTs and the corresponding bViTs on ImageNet-1K.}
    \resizebox{0.8\textwidth}{!}{%
    \begin{tabular}{lccccccc}
        \hline
        Model & Emb dim & Heads & Blocks & Steps & Params & FLOPs & Val acc \\ 
        \hline

        ViT-S   & 384  & 6  & 12 & 1 & 22.0M    & 8.5G    & 0.782{\scriptsize$\pm$0.003} \\
        ViT-B   & 768  & 12 & 12 & 1 & 86.6M  & 33.7G   & 0.789{\scriptsize$\pm$0.003} \\
        ViT-L   & 1024 & 16 & 24 & 1 & 304.3M & 119.3G  & 0.808{\scriptsize$\pm$0.004}   \\ 
        \hline

        bViT-S  & 384  & 6  & 1  & 12 & 2.5M   & 8.5G    & 0.681{\scriptsize$\pm$0.003} \\
        bViT-B  & 768  & 12 & 1  & 12 & 8.6M   & 33.7G   & 0.779{\scriptsize$\pm$0.002} \\
        bViT-L  & 1024 & 16 & 1  & 24 & 14.6M  & 119.3G  & 0.805{\scriptsize$\pm$0.003}   \\
        
        \hline
    \end{tabular}
    }
    \label{tab:vit_imagenet}
\end{table}

\subsubsection{Width and recurrence}
\label{sec:depth_recurrence_results}

To better understand the roles of model width and recurrent computation, we perform additional ImageNet-1K experiments based on the bViT-S configuration, with results shown in Fig.~\ref{fig:ablation} and Table~\ref{tab:block_ablation}. Increasing the number of maximal number of steps for training from 12 to 18 produces little change in validation accuracy, whereas increasing the embedding dimension from 384 to 768 leads to steady improvements. This suggests that, in recurrent ViTs, the capacity of the shared representation is more important than simply applying the same block for more steps. We also varied the number of attention heads in bViT-S from 6 to 48, but observed similar accuracies of  around 0.68, suggesting that increasing heads count alone does not resolve the capacity bottleneck in the narrow recurrent regime. Table~\ref{tab:block_ablation} further studies the tradeoff between independently parameterized block depth and recurrence at fixed embedding dimension, while keeping the product of blocks and steps constant. Here, accuracy increases as computation is shifted from recurrent steps to distinct transformer blocks, indicating that the relation between block depth and recurrence is capacity dependent. At small width, recurrence cannot fully substitute for independently parameterized depth, suggesting that the shared block needs sufficient representational capacity to support different effective computations across steps.

\subsubsection{The role of width in recurrent depth sharing}

The dependence of bViT on embedding dimension suggests that  parameter
sharing imposes a capacity constraint that cannot be resolved simply by running
the same block for more steps. We interpret this as implicit depth multiplexing.
In a standard ViT, different blocks can specialize across depth, whereas in bViT
such specialization must emerge from the interaction between a single shared
block and the evolving recurrent state. The hidden state must therefore represent
both visual content and the stage of computation. This view is consistent with
looped transformer expressivity results, where simulating multiple distinct
layers generally requires additional capacity in the recurrent block~\cite{saunshi2025reasoning}. Increasing the number of recurrent steps gives more applications of the same operator, but does not enlarge the representation space in which different step-dependent modes can be encoded. Increasing the embedding dimension, in contrast, can allow multiple such modes to coexist within the shared recurrent block.

A complementary view comes from residual write projections. In a standard ViT,
the update at layer $\ell$ can be written abstractly as:
\begin{equation}
    \Delta_\ell = W_\ell \phi_\ell(x_\ell),
\end{equation}
where $W_\ell$ is layer specific. In bViT, the analogous update is:
\begin{equation}
    \Delta_t = W \phi_t(x_t),
\end{equation}
where the same projection $W$ is reused at every recurrent step. Although
$\phi_t(x_t)$ changes with the recurrent state, updates are still written through
the same learned basis. The constraint is therefore not the dimension of the
residual stream itself, which is $d$ in both models, but the loss of
depth-specific write bases. Standard ViT can use different projections at
different stages, whereas bViT must express step-dependent updates through shared
projections. Appendix~\ref{app:appendix_a} provides additional support for the
view that width is important in recurrent depth sharing. We first present a simplified capacity model for recurrent depth sharing, adapted from looped transformer simulation arguments~\cite{saunshi2025reasoning}, under the assumption that part of the computation is shared across blocks in a standard transformer. We then analyze the effective rank of ViT and bViT projections. The results suggest that bViT relies on high-rank shared projections and is more sensitive to rank reduction, consistent with the view that width is an important capacity resource for recurrent depth sharing.

\begin{table}[]
    \centering
    \caption{Comparison of ViT-S variants on ImageNet-1K with different allocations of computation between per-step block depth and recurrent steps. Validation accuracy increases as more computation is shifted from recurrence to per-step block depth.}
    \resizebox{0.9\textwidth}{!}{%
    \begin{tabular}{lccccccc}
        \hline
        Model & Emb dim & Heads & Blocks & Steps & Params & FLOPs & Val acc \\
        \hline
        bViT-S (1B,12S)  & 384 & 6 & 1  & 12 & 2.5M  & 8.5G & 0.681 \\
        ViT-S (2B,6S)    & 384 & 6 & 2  & 6  & 4.3M  & 8.5G & 0.729 \\
        ViT-S (3B,4S)    & 384 & 6 & 3  & 4  & 6.1M  & 8.5G & 0.754 \\
        ViT-S (4B,3S)    & 384 & 6 & 4  & 3  & 7.9M  & 8.5G & 0.764 \\
        ViT-S (6B,2S)    & 384 & 6 & 6  & 2  & 11.4M & 8.5G & 0.781 \\
        ViT-S (12B,1S)   & 384 & 6 & 12 & 1  & 22.0M & 8.5G & 0.782 \\
        \hline
    \end{tabular}
    }
    \label{tab:block_ablation}
\end{table}

\subsubsection{bViT variants}

\begin{wraptable}{r}{7cm}
    \vspace{-4mm}
    \centering
    \caption{Validation accuracy of bViT-B variants on ImageNet-1K.}
    \label{tab:vit_variants}

    \begin{tabular}{llc}
        \hline
        Model  & Modification & Val acc \\ 
        \hline
        bViT-B    & Baseline             & 0.779 \\
        bViT-B-TE & Time step embeddings & 0.780 \\
        bViT-B-R  & Registers            & 0.781 \\
        \hline
    \end{tabular}

    \vspace{-3mm}
\end{wraptable}

In addition to the baseline bViT-B, we investigate two variants that provide extra mechanisms for step-dependent computation. First, following Universal Transformer and LoopViT, we add trainable time step embeddings, denoted bViT-B-TE~\cite{dehghani2018universal,shu2026loopvit}. After each recurrence step, we add a learnable time embedding to the token sequence:
\begin{equation}\label{eq:bViT-TE}
x^{t+1} = F(x^t + TE^t),
\end{equation}
where $TE^t \in \mathbb{R}^{d}$ denotes the embedding for step $t$. Second, we augment bViT with 4 register tokens, denoted bViT-B-R, which can store information useful for computation~\cite{darcet2024vision} and recurrent reasoning~\cite{jolicoeur2025less}. Table~\ref{tab:vit_variants} shows that both variants slightly improve classification performance, suggesting that the baseline already captures much of the iteration logic through the evolving recurrent state. In the subsequent  sections, we show that time embeddings can be useful for transfer learning, while registers improve object localization in attention maps. Additionally, Appendix~\ref{app:early_exit} investigates early-exit capabilities in bViT in comparison to standard ViT. Appendix~\ref{app:distillation} shows that bViT can further benefit from knowledge distillation. Moreover, two recurrent variants are presented in Appendix~\ref{bViT-variants}.

\subsection{Mechanistic analysis of recurrent computation}
\label{sec:Interpretability}

The previous section shows that bViT can recover much of the performance of standard ViTs when the embedding dimension is sufficiently large. We now analyze how the shared block is used across recurrent steps. In particular, we ask whether recurrence induces step-dependent visual computation, where fixed components acquire different effective roles as the hidden state evolves.

\subsubsection{Activation patterns}

\begin{wrapfigure}{r}{7cm}
    \vspace{-2mm}
    \centering

    \includegraphics[width=7cm]{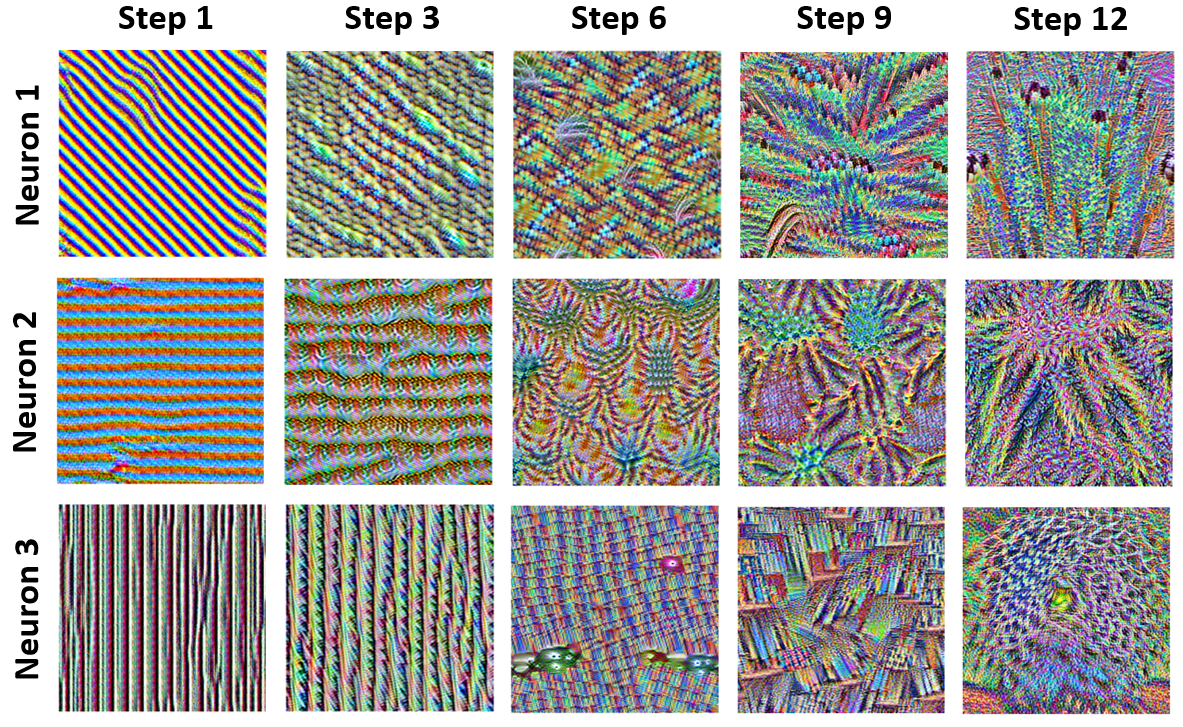}

\caption{Optimization based FFN neuron visualizations across bViT-B recurrence steps. Early steps show simple periodic textures, while later steps reveal more structured patterns, indicating step dependent neuron behavior.}
    \label{fig:activation_patterns}

    \vspace{-2mm}
\end{wrapfigure}

Neuron activation patterns in standard ViTs exhibit a depth dependent progression, with early blocks responding to simple patterns such as edges or textures, and later blocks responding to more complex visual structures~\cite{ghiasi2022vision}. Unlike in standard multi-block ViTs, where such comparisons involve different neurons across blocks, the recurrent formulation allows us to track the same neuron across steps and reveals that its preferred patterns change with the recurrent state.

We unroll bViT-B into a 12-step computation graph and apply an optimization based visualization method to synthesize images that maximize the activation of neurons in the FFN expansion layer. Figure~\ref{fig:activation_patterns} shows the resulting patterns for three selected neurons, where each row corresponds to the same neuron and each column to a recurrent step. We observe a clear progression across recurrent steps. Early steps are dominated by simple edge and texture like patterns, whereas later steps give rise to more structured and composite visual patterns. This suggests that recurrent computation changes the effective selectivity of fixed FFN neurons as the hidden state evolves.

\begin{figure}[b]
    \centering
    \resizebox{\textwidth}{!}{%
    \begin{minipage}{\textwidth}
        \centering

        \begin{subfigure}[t]{0.31\textwidth}
            \centering
            \includegraphics[width=\textwidth]{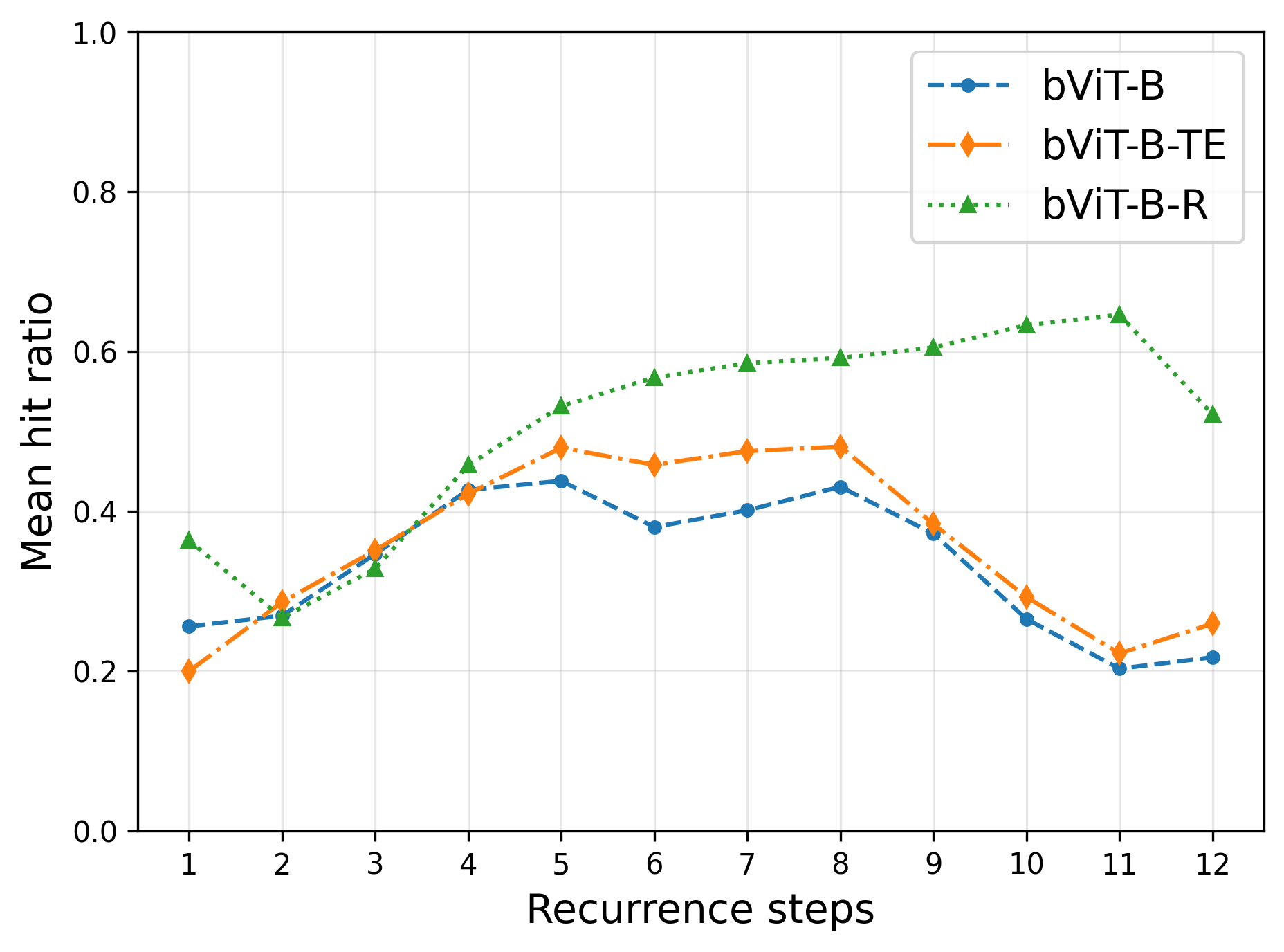}
            \caption{Average over heads.}
            \label{fig:attention_mean_heads}
        \end{subfigure}
        \hfill
        \begin{subfigure}[t]{0.31\textwidth}
            \centering
            \includegraphics[width=\textwidth]{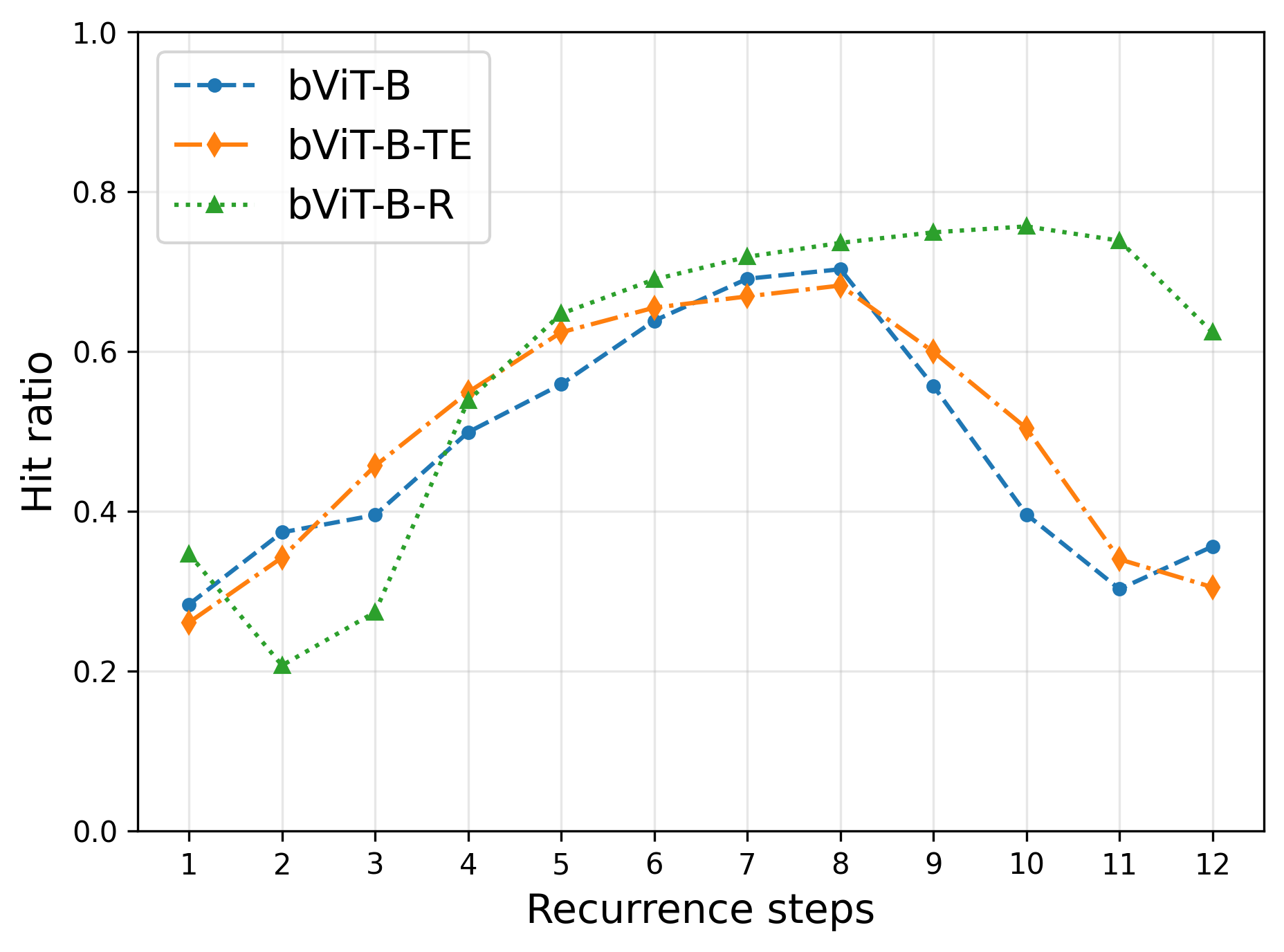}
            \caption{Best performing heads}
            \label{fig:attention_best_heads}
        \end{subfigure}
        \hfill
\begin{subfigure}[t]{0.31\textwidth}
    \centering
    \includegraphics[height=0.72\linewidth]{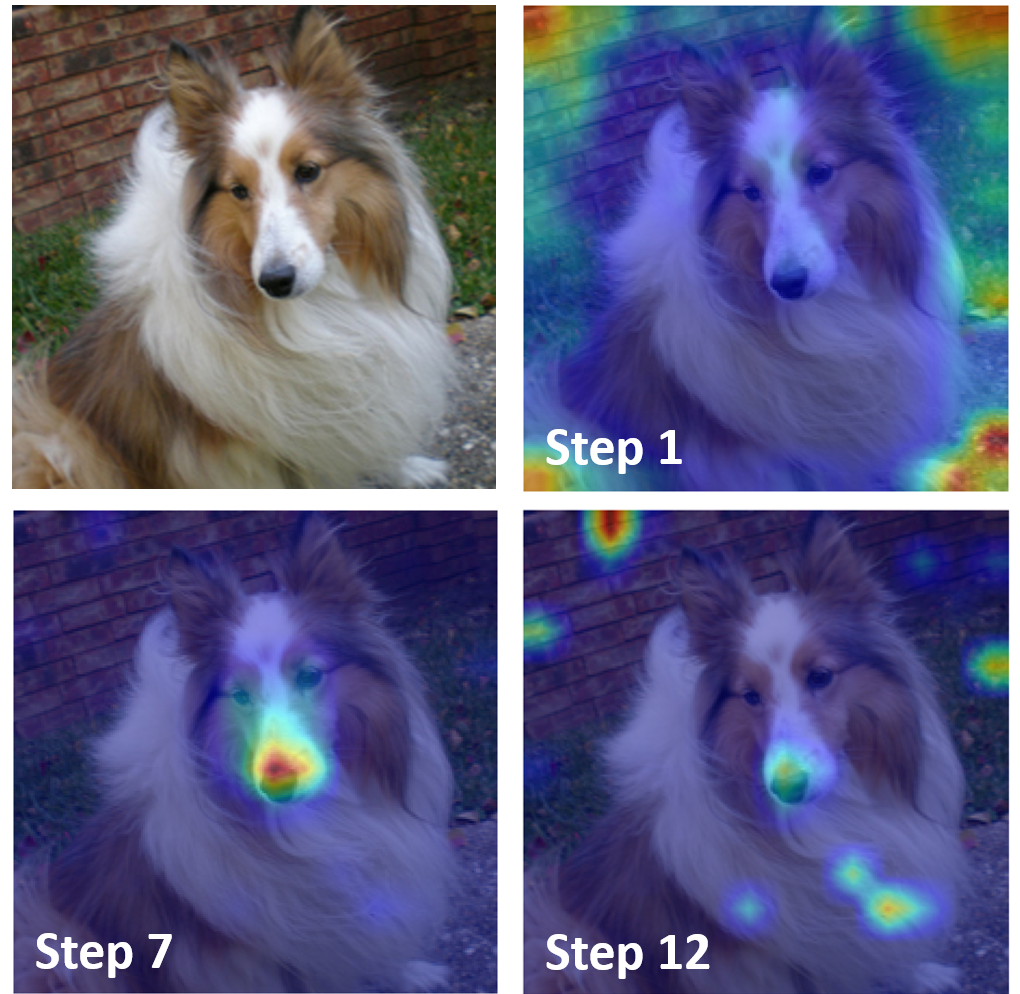}
    \caption{Example attention maps.}
    \label{fig:attention_steps_example}
\end{subfigure}

    \end{minipage}%
    }
    \caption{Evaluation of attention maps across recurrent steps on ImageNet-S. We report the mean hit ratio over all heads in (a) and, for each recurrent step, the hit ratio of the head with the highest pointing-game score in (b). Localization improves over early recurrent steps and typically peaks at intermediate or late steps. Panel (c) shows an example of how attention evolves across recurrent steps for the best performing head in bViT-B.}
    \label{fig:attention_plots}
\end{figure}

\subsubsection{Attention patterns}\label{sec:attention-patterns}

In multi-block ViTs, attention head indices are only meaningful within a given block. Therefore, tracking the same head across depth is not well defined. In contrast, our recurrent architecture reuses the same attention block at every step, so a fixed head preserves its identity throughout the recurrent computation. This architectural property makes it possible to analyze the temporal behavior of a single head across recurrent steps. We study this effect in three variants: {bViT-B}, {bViT-B-TE}, and {bViT-B-R}. The analysis is performed on the ImageNet-S validation set, which includes segmentation masks for 12,419 images covering 919 classes~\cite{gao2022large}. To quantify whether a head attends to the object region, we use the standard pointing-game metric where hit is counted when the maximum of the CLS attention map falls inside the ground-truth segmentation mask~\cite{zhang2016top}. 

In Fig.~\ref{fig:attention_plots}(a), we report the mean hit ratio averaged over all heads, which measures how object-centric the head population is at each recurrent step. In Fig.~\ref{fig:attention_plots}(b), we report, for each recurrent step, the hit ratio of the head with the highest pointing-game score. The magnitude and persistence of object-centric localization depend on the bViT variant. In particular, bViT-B-R with registers achieves the strongest localization over most intermediate and late steps, both when averaged across heads and when considering the best head at each step. This indicates that registers promote not only stronger peak localization in individual heads, but also a broader population-level shift toward object-centered attention. More generally, the best-head analysis shows that recurrent bViTs contain highly object-localizing heads whose specialization emerges at particular recurrent steps. This step-dependent localization is consistent with our view that fixed attention heads can change their effective role across recurrence. Localization typically peaks at intermediate or late recurrent steps and then drops at the final step, suggesting that additional recurrent updates do not monotonically improve object focus. These results support the view that recurrence induces a temporal organization of attention, rather than simply repeating the same attention pattern at every step. In addition, we analyze latent-space dynamics in Appendix~\ref{sec:latent-dynamics}, while Appendix~\ref{app:attention_heads} provides per-head pointing-game scores and qualitative attention maps.

\subsubsection{Step-specific pathways via pruning}
\label{sec:rtl}

Our experiments suggest that recurrent depth sharing requires sufficient embedding dimension. To examine how this capacity is used, we investigate whether the shared block relies on the same effective parameters at every recurrent step, or whether different subsets of weights become important at different recurrent stages. We study this using RTL-style pruning~\cite{stefanski2026rtl}. We design this pruning method to assign separate binary pruning masks to each recurrent step. For a 12-step bViT, this yields masks $m_1,\ldots,m_{12}$, allowing us to distinguish weights used across many steps from weights used only at selected steps. This provides a simple way to analyze pathway specialization in bViT. We perform this analysis on CIFAR-10 and provide full details in Appendix~\ref{app:rtl}.

Figure~\ref{fig:rlt_main} shows that the shared block contains a structured mixture of shared and step-specific pathways. As sparsity increases, active weights shift toward lower recurrent step utilization, and weights active in a few steps become increasingly prominent. This indicates that bViT does not use the same effective subnetwork at every step. Instead, recurrence appears to multiplex computation through a shared block, combining weights used broadly across steps with subsets that are active only at particular stages. The specialization is strongest in attention related components, especially the attention projection layer, while the FFN remains comparatively more shared.

These results support the implicit depth multiplexing view introduced earlier. In a recurrent ViT, shared visual processing and step-dependent computation are both expressed through the same block and evolving hidden state. Larger embedding dimensions may provide the capacity needed for these modes to coexist, whereas narrow recurrent models have less space for such specialization.

\begin{figure}[]
    \centering
    \begin{subfigure}[b]{0.32\textwidth}
        \includegraphics[width=\textwidth]{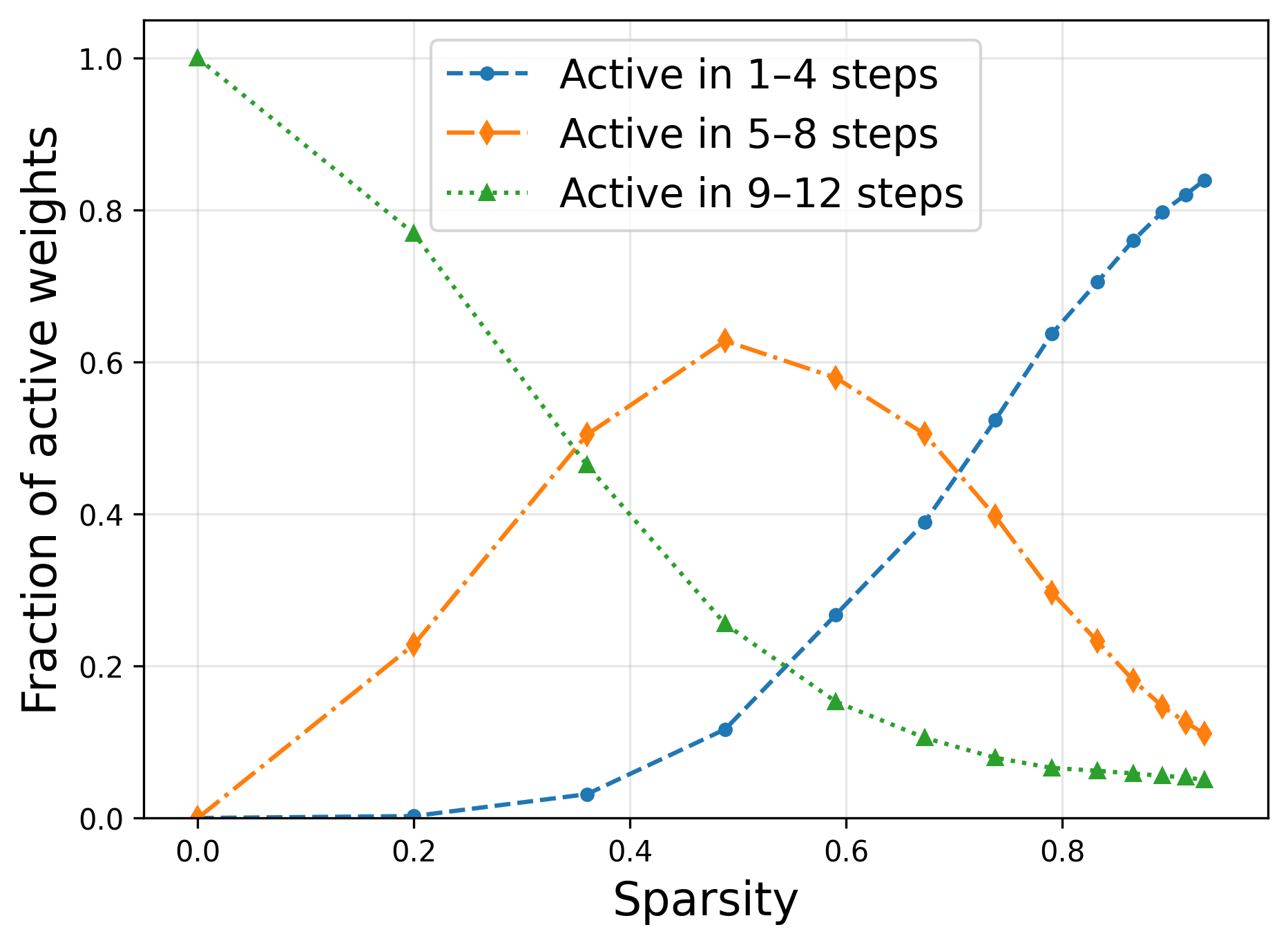}
        \caption{Active weights by recurrent step utilization.}
        \label{fig:rlt_main_a}
    \end{subfigure}
    \hfill
    \begin{subfigure}[b]{0.32\textwidth}
        \includegraphics[width=\textwidth]{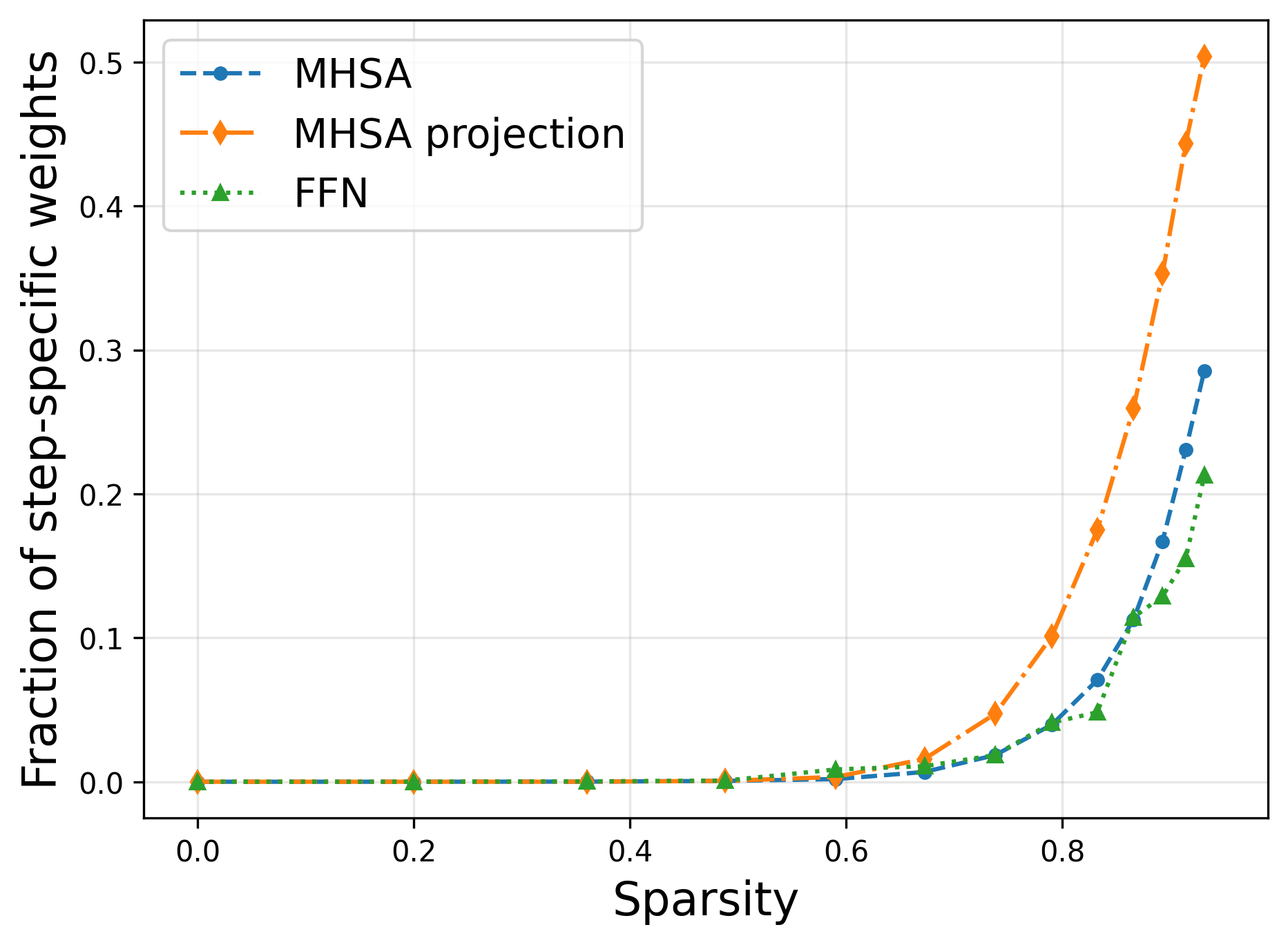}
        \caption{Fractions of weights active in exactly one recurrent step.} 
        \label{fig:rlt_main_b}
    \end{subfigure}
    \hfill
    \begin{subfigure}[b]{0.32\textwidth}
        \includegraphics[width=\textwidth]{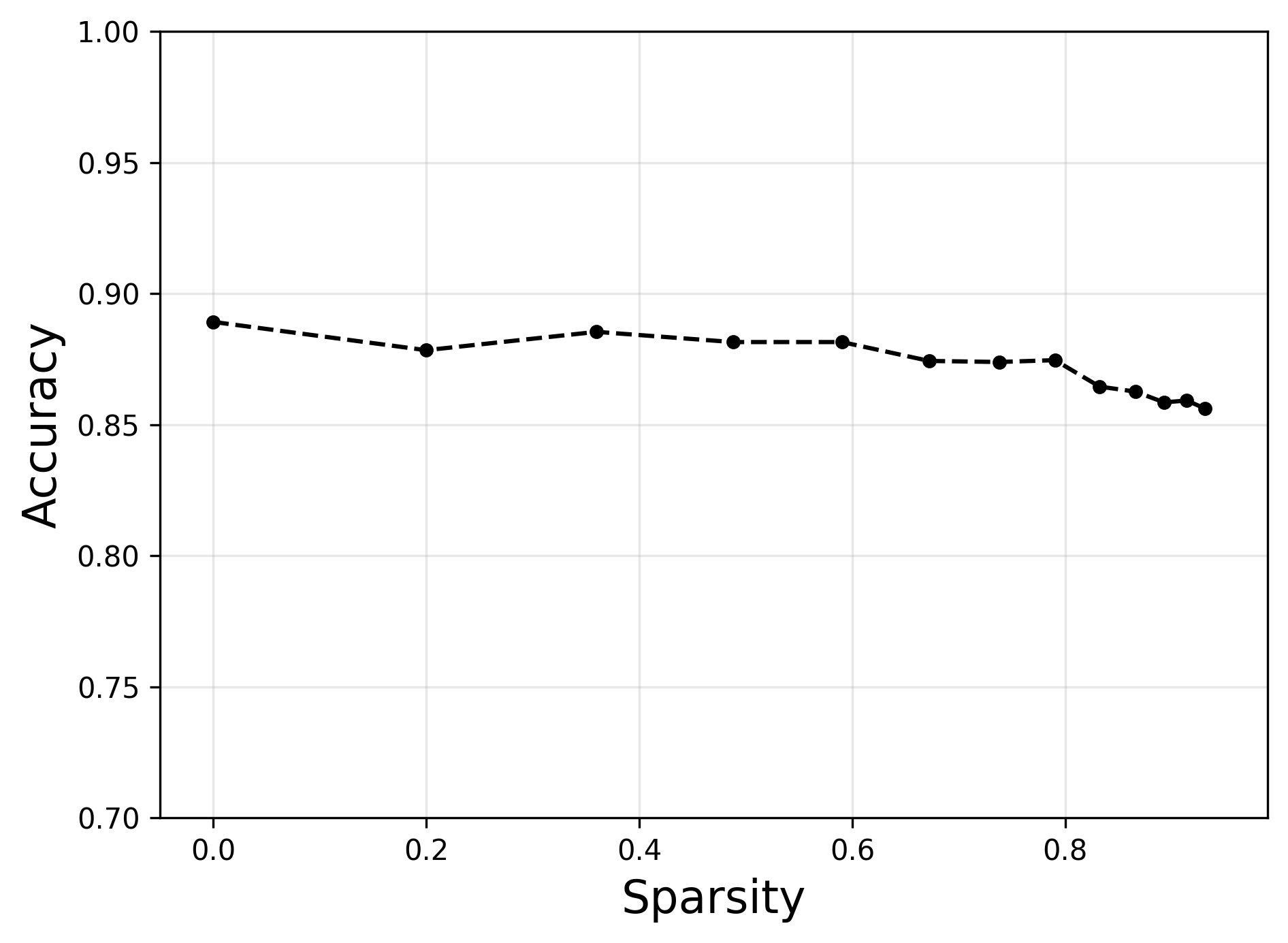}
        \caption{Influence of pruning on validation accuracy.}
        \label{fig:rlt_main_c}
    \end{subfigure}
\caption{Pruning reveals step-specific weight usage in bViT. 
(a) Active weights are categorized into three groups according to the number of recurrent steps in which a weight is active: 1--4, 5--8, or 9--12 steps. As sparsity increases, active weights shift toward lower recurrent-step utilization. 
(b) Fraction of step-specific weights, defined as weights active in exactly one recurrent step, shown separately for each component. 
(c) CIFAR-10 validation accuracy degrades slowly as sparsity increases.}
    \label{fig:rlt_main}
\end{figure}

\subsection{Transfer learning}
\label{sec:transfer}

We evaluate whether bViT learns representations that transfer beyond ImageNet classification. Since bViT reuses a single transformer block across all recurrent steps, strong parameter sharing could limit the generality of the learned features. We therefore test ImageNet-pretrained bViT-B on six downstream vision datasets: CIFAR10~\cite{Krizhevsky09learningmultiple}, CIFAR100~\cite{Krizhevsky09learningmultiple}, Oxford-IIIT Pets~\cite{parkhi2012cats}, StanfordCars~\cite{krause20133d}, DTD~\cite{cimpoi14describing}  and Oxford Flowers102~\cite{nilsback2008automated}. For comparison, we consider linear probing, full fine-tuning, and parameter-efficient transfer learning using LoRA with rank 8, applied either to the query-value projections in attention heads or to the FFN layers~\cite{hu2022lora}. We also evaluate bViT-B-TE, where only the time embeddings are fine-tuned. In this setting, the recurrent block can be viewed as a shared operator whose iterative behavior is steered toward a new downstream task through lightweight step-dependent conditioning. Additional implementation details and dataset descriptions are given in Appendix~\ref{app:transfer}.

Table~\ref{tab:transfer} shows that bViT-B achieves performance comparable to standard ViT-B. Under linear probing, bViT-B slightly outperforms ViT-B on average, indicating that the recurrent representation transfers well even when the backbone is frozen. Tuning only the time-step embeddings updates just 9K parameters while improving the average accuracy from 0.783 to 0.831, suggesting that lightweight step-dependent conditioning can adapt recurrent computation to downstream tasks. Under full fine-tuning, bViT-B remains close to ViT-B, with only a small average performance gap. Generally, bViT-B requires substantially fewer trainable parameters than ViT-B while maintaining competitive accuracy. For example, LoRA uses roughly 12 times fewer parameters for bViT-B than for ViT-B, yet achieves similar average performance for most of the target datasets. This suggests that downstream adaptation of bViT can be achieved by modifying a compact shared parameter space, since a small update to the recurrent block can influence the effective computation across all steps. This is related in spirit to LoRA methods that tie adapter parameters across transformer layers~\cite{liuni,renduchintala2024tied}, although in bViT this tying arises from the recurrent architecture.

\begin{table}[]
    \centering
    \caption{Comparison of transfer learning capabilities between the regular ViT-B and the proposed bViT-B, both pretrained on ImageNet-1K. bViT-B transfers competitively to downstream tasks, while requiring less trainable parameters.} 
    \resizebox{1\textwidth}{!}{ 
    \begin{tabular}{clcccccccc} 
        \hline
        \textbf{Model} & \textbf{Method} & \textbf{Params} & \textbf{CIFAR10} & \textbf{CIFAR100} & \textbf{Pets} & \textbf{Cars} & \textbf{DTD} & \textbf{Flowers} & \textbf{Avg} \\ \hline

        \multirow{4}{*}{ViT-B}  
                              & Linear probing        & ---   & 0.935{\scriptsize$\pm$0.004} & 0.788{\scriptsize$\pm$0.001} & 0.919{\scriptsize$\pm$0.003} & 0.480{\scriptsize$\pm$0.001} & 0.661{\scriptsize$\pm$0.008} & 0.830{\scriptsize$\pm$0.008} & 0.769 \\
                              
                              & Fine-tuning           & 85.8M & 0.983{\scriptsize$\pm$0.001} & 0.888{\scriptsize$\pm$0.001} & 0.935{\scriptsize$\pm$0.001} & 0.868{\scriptsize$\pm$0.002} & 0.708{\scriptsize$\pm$0.002} & 0.929{\scriptsize$\pm$0.007} & 0.885 \\
                              
                              & LoRA, QV              & 296K  & 0.978{\scriptsize$\pm$0.001} & 0.866{\scriptsize$\pm$0.001} & 0.929{\scriptsize$\pm$0.001} & 0.811{\scriptsize$\pm$0.001} & 0.695{\scriptsize$\pm$0.005} & 0.919{\scriptsize$\pm$0.010} & 0.866 \\   
                              
                              & LoRA, FFN             & 738K  & 0.979{\scriptsize$\pm$0.001} & 0.870{\scriptsize$\pm$0.001} & 0.929{\scriptsize$\pm$0.002} & 0.812{\scriptsize$\pm$0.002} & 0.700{\scriptsize$\pm$0.006} & 0.918{\scriptsize$\pm$0.011} & 0.868 \\    
        \hline  
                              
        \multirow{5}{*}{bViT-B} 
                              & Linear probing        & ---  & 0.931{\scriptsize$\pm$0.001} & 0.772{\scriptsize$\pm$0.001} & 0.924{\scriptsize$\pm$0.001} & 0.548{\scriptsize$\pm$0.001} & 0.656{\scriptsize$\pm$0.007} & 0.867{\scriptsize$\pm$0.004} & 0.783 \\
                              
                              & Time embedding tuning & 9K    & 0.970{\scriptsize$\pm$0.001} & 0.839{\scriptsize$\pm$0.001} & 0.925{\scriptsize$\pm$0.001} & 0.662{\scriptsize$\pm$0.001} & 0.679{\scriptsize$\pm$0.003} & 0.911{\scriptsize$\pm$0.003} & 0.831 \\
                              
                              & Fine-tuning           & 7.8M  & 0.982{\scriptsize$\pm$0.001} & 0.880{\scriptsize$\pm$0.002} & 0.925{\scriptsize$\pm$0.002} & 0.852{\scriptsize$\pm$0.004} & 0.693{\scriptsize$\pm$0.012} & 0.923{\scriptsize$\pm$0.004} & 0.876 \\
                              
                              & LoRA, QV              & 25K   & 0.975{\scriptsize$\pm$0.001} & 0.854{\scriptsize$\pm$0.001} & 0.933{\scriptsize$\pm$0.001} & 0.768{\scriptsize$\pm$0.005} & 0.690{\scriptsize$\pm$0.012} & 0.917{\scriptsize$\pm$0.002} & 0.856 \\
                              
                              & LoRA, FFN             & 62K   & 0.977{\scriptsize$\pm$0.001} & 0.859{\scriptsize$\pm$0.004} & 0.928{\scriptsize$\pm$0.002} & 0.792{\scriptsize$\pm$0.004} & 0.695{\scriptsize$\pm$0.014} & 0.926{\scriptsize$\pm$0.001} & 0.863 \\ 

        \hline
    \end{tabular}
    }
    \label{tab:transfer}
\end{table}

\section{Conclusions}

In this work, we investigated a single-block recurrent vision transformer, bViT, as a controlled setting for understanding recurrent computations in vision. Instead of assigning separate parameters to each transformer block, bViT applies one shared block repeatedly, preserving the iterative structure of a deep ViT while removing layer specific transformations. Our experiments show that recurrent depth sharing can recover much of the performance of standard ViTs when the representation is sufficiently wide. On ImageNet-1K, bViT-B and bViT-L closely approach their standard ViT counterparts, while narrow recurrent models degrade substantially. Beyond ImageNet classification, bViT transfers competitively to downstream datasets and supports parameter efficient fine-tuning, indicating that the recurrent formulation remains useful beyond the pretraining task. We further used bViT as a setting for mechanistic analysis of recurrent visual computation. Activation visualizations, attention localization, and step specific pruning all suggest that the shared block does not perform the same effective computation at every step. Instead, network components change their role across recurrence as the hidden state evolves. In particular, our step specific pruning analysis reveals a mixture of broadly shared and recurrence specific pathways, providing evidence for implicit depth multiplexing within a single transformer block. 

\textbf{Limitations.} bViT reduces the number of stored transformer block parameters by roughly an order of magnitude, but it does not reduce FLOPs by default, since the shared block is still applied repeatedly. Early-exit mechanisms can partly address this issue, and Appendix~\ref{app:early_exit} shows that such mechanisms are feasible for bViT. A second limitation is the dependence on representation width. Narrow recurrent ViTs are substantially less effective, indicating that single-block recurrence requires sufficient capacity to support step-dependent computation. Finally, broader evaluation on dense prediction tasks, video, and self-supervised pretraining remains an important direction for future work.

\clearpage

{
\small
\bibliographystyle{plain}
\bibliography{neurips_2026}
}


\appendix

\clearpage

\section*{Appendix}

\section{A capacity model and rank analysis}
\label{app:appendix_a}

\subsection{A capacity model for recurrent depth sharing}
\label{app:capacity_model}

In the main manuscript, we interpret the width dependence of bViT as implicit depth multiplexing, arguing that a shared block must encode multiple step-dependent computations within one parameter set. Here, we provide a formal construction that motivates this interpretation and illustrates why width might be an important resource for recurrent depth sharing. The construction is directly adapted from the looped transformer simulation framework of \cite{saunshi2025reasoning}, which shows that looped transformers can emulate certain deep non-looped transformers using additional width and step-dependent state. We adjust their argument to
the encoder-only vision setting and extend it to a shared-plus-specific block decomposition, in order to illustrate when recurrent depth sharing can be effective and parameter efficient.

Our goal is not to prove that bViT exactly simulates a standard ViT. bViT contains no explicit step-conditioned bank, the same block parameters are applied
at every recurrent step. Instead, any step-specific behavior must emerge implicitly from the interaction between the shared block and the evolving hidden
state. The construction therefore serves as a capacity illustration. It shows what width would be required if step-specific behavior were represented explicitly, and motivates why similar capacity may be needed when such behavior must instead be represented implicitly.

Consider a depth-\(R\) transformer whose block at depth \(r\) can be decomposed as: 
\[
    B_r = U_r \circ S,
\]
where \(S\) denotes computation shared across depths and \(U_r\) denotes the depth-specific component. This decomposition is intended to capture the nontrivial case in which \(S\) represents a substantial computation reused across depths, while the depth-specific components \(U_r\) are comparatively compact. A recurrent transformer can store \(S\) once and apply it at every step. The depth-specific components \(U_1,\ldots,U_R\) can, in principle, be stored inside a step-conditioned bank and selected according to the recurrent step. This gives a capacity intuition: recurrent depth sharing is effective when the shared block is wide enough to represent both common computation reused across steps and step-dependent modes that replace layer specific parameters.

We first introduce notation for the simplified transformer setting used in the capacity argument. The notation in this appendix follows the looped transformer formalism of \cite{saunshi2025reasoning}, adapted to an encoder vision transformer. Unlike the autoregressive language model setting, this encoder maps an image to class logits rather than a token sequence to a next-token distribution.

Let an input image be mapped by a patch embedding layer to a sequence of \(N\)
patch tokens, and let a class token be prepended. After adding positional
embeddings, the input to the transformer backbone is expressed in the following way:
\[
    X^{(0)} \in \mathbb{R}^{(N+1)\times d},
\]
where \(d\) denotes the embedding dimension. Moreover, a transformer block is a map:
\[
    B : \mathbb{R}^{(N+1)\times d}
    \rightarrow
    \mathbb{R}^{(N+1)\times d}.
\]
For the capacity argument, we use a simplified transformer architecture. We
abstract away implementation details such as the specific normalization layer and
write a standard block as a composition of attention and feed-forward residual
updates as~\cite{saunshi2025reasoning}:
\[
    B(X)
    =
    (\mathrm{id}+\mathrm{FFN})\circ(\mathrm{id}+\mathrm{MHSA})(X),
\]
where \(\mathrm{id}\) denotes the identity map corresponding to the residual
connection, FFN is a two-layer feed-forward network with ReLU activation function, and MHSA is a multi-head self-attention block. 

A depth-\(R\) ViT encoder applies a sequence of transformer blocks:
\[
    \mathrm{Enc}_{\theta}
    =
    B_R \circ B_{R-1} \circ \cdots \circ B_1 .
\]
The classifier reads the final class CLS token and maps it to logits:
\[
    f_{\theta}(I)
    =
    \mathrm{HEAD}_{\theta}
    \left(
        \mathrm{Enc}_{\theta}
        \left(
            \mathrm{PATCH}_{\theta}(I)
        \right)_{\mathrm{CLS}}
    \right),
\]

where \(\mathrm{HEAD}_{\theta}\) denotes the classification head. A recurrent ViT encoder, such as bViT, replaces the sequence of independently
parameterized blocks with repeated applications of a single block. Given a block
\(F_{\theta'}\), the corresponding \(R\)-step recurrent encoder is:
\[
    \mathrm{Enc}_{\theta',R}^{\mathrm{loop}}
    =
    F_{\theta'}^{R}
    =
    \underbrace{
    F_{\theta'} \circ F_{\theta'} \circ \cdots \circ F_{\theta'}
    }_{R\ \mathrm{times}} .
\]
The resulting classifier is:
\[
    f_{\theta',R}^{\mathrm{loop}}(I)
    =
    \mathrm{HEAD}_{\theta'}
    \left(
        F_{\theta'}^{R}
        \left(
            \mathrm{PATCH}_{\theta'}(I)
        \right)_{\mathrm{CLS}}
    \right).
\]

This is the encoder-only analogue of the recurrent transformer model
\(p_{\theta,T}=\mathrm{OUTPUT}\circ(\mathrm{TB}_{\theta})^T\circ\mathrm{EMBED}\)
used by \cite{saunshi2025reasoning}. Although \cite{saunshi2025reasoning}
state their construction for causal decoder-only transformers, the
selector based gating mechanism used below depends on key-query manipulation and
dummy-token routing rather than on the causal mask. We therefore apply the same
construction in the encoder self-attention setting, where all image tokens can
attend bidirectionally.

\begin{proposition}[Shared-plus-specific recurrent construction]
\label{prop:shared-private-looping}
Consider a depth-\(R\) ViT encoder:
\[
    \mathrm{Enc}_{\theta}
    =
    B_R \circ B_{R-1} \circ \cdots \circ B_1 ,
\]
and suppose that each block admits a decomposition:
\[
    B_r = U_r \circ S,
    \qquad r=1,\ldots,R,
\]
where \(S\) denotes a transformer sub-block shared across depth and \(U_r\)
denotes a depth-specific transformer sub-block. 

Assume the formal construction setting  of
\cite{saunshi2025reasoning}, including bounded values, auxiliary coordinates for
encoding the recurrent step, and a dummy token used by the construction. Then
there exists a recurrent ViT encoder 
\[
    \mathrm{Enc}_{\theta',R}^{\mathrm{loop}}
    =
    F_{\theta'}^{R}
\]
such that, for every input image \(I\) we have the following:
\[
    f_{\theta}(I)
    =
    f_{\theta',R}^{\mathrm{loop}}(I).
\]
The recurrent block \(F_{\theta'}\) stores one copy of the shared component
\(S\) and a step-conditioned bank containing \(U_1,\ldots,U_R\). If \(S\) corresponds to FFN width \(d_{\mathrm{FF}}^S\) and \(H_S\) attention heads, and each \(U_r\)
has FFN width \(d_{\mathrm{FF}}^U\) and \(H_U\) attention heads, then the
construction can be implemented with:
\[
    d' = d + R + O(1),
    \qquad
    d_{\mathrm{FF}}'
    =
    d_{\mathrm{FF}}^S
    +
    R d_{\mathrm{FF}}^U
    +
    O(R),
    \qquad
    H'
    =
    H_S
    +
    R H_U
    +
    O(1).
\]
\end{proposition}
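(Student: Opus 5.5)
We will follow the Saunshi et al. simulation of a depth-$R$ non-looped transformer by a looped one, with a single change: the shared sub-block $S$ is placed in the recurrent block \emph{ungated}, and only the specific sub-blocks $U_1,\ldots,U_R$ are placed in a gated bank.

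\textbf{Step 1: augment the residual stream.} Embed $X^{(0)}$ into width $d'=d+R+O(1)$. The first $d$ coordinates carry the original features. $R$ further coordinates store a one-hot step indicator $e_r$, initialised to $e_1$ through a modified $\mathrm{PATCH}_{\theta'}$. The remaining $O(1)$ coordinates hold constants and mark the dummy token, which is appended by the embedding. $\mathrm{HEAD}_{\theta'}$ reads only the first $d$ coordinates of the CLS token, so it coincides with $\mathrm{HEAD}_\theta$ once the encoder outputs agree on those coordinates.

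\textbf{Step 2: build $F_{\theta'}$ as a composition of three pieces.}

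\emph{(a) The shared part $S$.} Copy $S$ acting on the first $d$ coordinates, with its heads and FFN extended by zero rows and columns. This costs $H_S$ heads and $d_{\mathrm{FF}}^S$ hidden units. Because $S$ is the same at every depth, it needs no gating.

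\emph{(b) The bank of specific parts.} Place $U_1,\ldots,U_R$ side by side, using $RH_U$ heads and $Rd_{\mathrm{FF}}^U$ hidden units. Each copy is gated by its step coordinate so that only the copy for the current step writes to the residual stream.
\begin{itemize}[leftmargin=*]
\item FFN gating: add a large negative bias $-M(1-[\text{step}=r])$ to the pre-activations of copy $r$. This is possible because values are bounded, and the ReLU then zeroes the inactive copies.
\item Attention gating: use the Saunshi et al. device of adding a large score toward the dummy token, weighted by $1-[\text{step}=r]$, to the query–key products. An inactive head then attends entirely to the dummy token, whose value vector is zero.
\end{itemize}
This device uses only query–key manipulation and never the causal mask, so it carries over to bidirectional encoder attention.

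\emph{(c) The step counter.} A small FFN with $O(R)$ hidden units maps $e_r\mapsto e_{r+1}$ on the auxiliary coordinates. For example, it writes $e_{r+1}-e_r$ using ReLU units that read the indicator.

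\textbf{Step 3: verify equivalence by induction.} Show that after $r$ applications of $F_{\theta'}$ the first $d$ coordinates equal $B_r\circ\cdots\circ B_1(X^{(0)})$ and the counter equals $e_{r+1}$. The inductive step uses two facts.
\begin{itemize}[leftmargin=*]
\item Residual updates from (a) and (b) compose as $U_r\circ S$. This requires ordering the sub-layers of $F_{\theta'}$ so that $S$'s attention and FFN precede the bank's. If $S$ and $U_r$ are each a full attention-plus-FFN sub-block, $F_{\theta'}$ stacks them as consecutive sub-layers of one block. We will note that this stays within the allowed head and width counts, or else fold the sequential order into the Saunshi et al. construction, where a block may contain several sub-layers.
\item Gated copies contribute exactly zero, not approximately zero. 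Saunshi et al. obtain exactness from hard attention or bounded-precision assumptions, and we inherit that assumption.
\end{itemize}
Summing the resources gives the stated $d'$, $d'_{\mathrm{FF}}$ and $H'$, and the equality of the classifiers $f_\theta(I)=f^{\mathrm{loop}}_{\theta',R}(I)$ follows.

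\textbf{Main obstacle.} The delicate step is making the attention gating exact in the encoder setting. Two things must hold at once.
\begin{itemize}[leftmargin=*]
\item Inactive heads must send all their attention mass to the dummy token, so they contribute zero.
\item The active head's softmax over the real tokens must be unaffected by the presence of the dummy token.
\end{itemize}
For the second point, we would first make the dummy token's key score $-M$ whenever the head is active. The dummy token also has its own row and is updated, so we must additionally check that its state never leaks into other tokens' attention. That holds if the dummy's values are zero and its keys are suppressed in active heads, both of which are enforced using the $O(1)$ marker coordinates.
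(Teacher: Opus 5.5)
Your proposal is correct and follows essentially the same route as the paper's proof. That route has five parts: one ungated copy of $S$; a bank of $U_1,\ldots,U_R$ gated by step coordinates, using negative pre-activation shifts for the FFN branches and attention to a zero-value dummy token for inactive heads; an $O(R)$ auxiliary MLP that advances the step state; induction on the invariant that the data coordinates equal $X^{(r)}$; and a direct count of resources. The differences are cosmetic (you store the one-hot $e_r$ with a constant coordinate instead of the complementary selector $s_r=\mathbf{1}_R-e_r$), and the caveats you flag (sub-layer ordering inside $F_{\theta'}$, exactness of softmax gating, dummy-token leakage) are ones the paper's proof also leaves to the inherited setting of \cite{saunshi2025reasoning}.
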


\begin{proof}
The construction follows the looped transformer simulation argument of
\cite{saunshi2025reasoning}, adjusted to the case in which each block has a
shared component \(S\) and a depth-specific component \(U_r\). We construct a
single recurrent block \(F_{\theta'}\) that, at recurrent step \(r\), performs
the computation of the original ViT block:
\[
    B_r = U_r \circ S .
\]

Let
\[
    X^{(r)} \in \mathbb{R}^{(N+1)\times d}
\]
denote the hidden token sequence after the first \(r\) blocks of the original
depth-\(R\) ViT encoder. Under the assumed decomposition \(B_r = U_r \circ S\),
these hidden states satisfy:
\[
    X^{(r)}
    =
    B_r(X^{(r-1)})
    =
    U_r(S(X^{(r-1)})),
    \qquad r=1,\ldots,R .
\]
The recurrent model augments each token representation with auxiliary coordinates that encode the current recurrent step and select the corresponding
depth-specific component. Following the construction of \cite{saunshi2025reasoning}, we store the selector
in the complementary form:
\[
    s_r = \mathbf{1}_R - e_r ,
\]
where \(\mathbf{1}_R\in\mathbb{R}^R\) is the all-ones vector and
\(e_r\in\{0,1\}^R\) is the one-hot vector for recurrent step \(r\). Therefore,
for branch \(j\), \(s_r(j)=0\) if \(j=r\), and \(s_r(j)=1\) otherwise. This
form is convenient because inactive branches can be suppressed by adding
selector-dependent negative shifts to their preactivations.

For token position \(i\), we write the augmented hidden state of the recurrent construction after \(r\) recurrent steps as:
\[
    Z_i^{(r)}
    =
    \bigl(
        X_i^{(r)},
        s_{r+1},
        c_{r+1},
        a_i
    \bigr)
    \in
    \mathbb{R}^{d+R+O(1)}.
\]
Here \(X_i^{(r)}\in\mathbb{R}^d\) are the data coordinates,
\(s_{r+1}=\mathbf{1}_R-e_{r+1}\) is the complementary selector for the next
recurrent step, \(c_{r+1}\) is a constant-dimensional encoding of the recurrent
step, and \(a_i\) contains token markers, such as the class-token indicator and
any dummy-token marker used by the formal simulation construction. The embedding
layer initializes:
\[
    Z_i^{(0)}
    =
    \bigl(
        X_i^{(0)},
        s_1,
        c_1,
        a_i
    \bigr),
\]
where \(X^{(0)}\) stores the patch-token, class-token, and positional embedding
sequence of the original ViT encoder.

The recurrent block \(F_{\theta'}\) has three parts. First, it applies the shared
sub-block \(S\) to the data coordinates and leaves the auxiliary coordinates
unchanged:
\[
    \bigl(
        X^{(r-1)},
        s_r,
        c_r,
        a
    \bigr)
    \mapsto
    \bigl(
        S(X^{(r-1)}),
        s_r,
        c_r,
        a
    \bigr).
\]
Since \(S\) is common to all depths, a single copy of its parameters is stored
inside \(F_{\theta'}\).

Second, \(F_{\theta'}\) contains a step-conditioned bank of depth-specific
sub-blocks:
\[
    U_1,\ldots,U_R .
\]
The bank is implemented so that, at recurrent step \(r\), only the branch
corresponding to \(U_r\) is active. Equivalently, on the data coordinates it
computes:
\[
    \sum_{j=1}^{R} e_r(j)\,
    U_j\!\left(S(X^{(r-1)})\right)
    =
    U_r\!\left(S(X^{(r-1)})\right).
\]
Therefore, after the shared sub-block and the selected step-specific sub-block have
been applied, the data coordinates are:
\[
    X^{(r)}
    =
    U_r(S(X^{(r-1)}))
    =
    B_r(X^{(r-1)}).
\]

We now describe how the selector-gated bank is implemented, following the
mechanism used in the looped transformer simulation proof of
\cite{saunshi2025reasoning}. By the bounded-value assumption, all preactivations appearing in the original
computation are bounded in magnitude by some constant. Choose a constant \(C\)
larger than this bound.

For the feed-forward part, place the FFNs of \(U_1,\ldots,U_R\) in parallel.
For branch \(j\), shift the preactivation of each hidden unit by a
selector-dependent term \(-C s_r(j)\). If \(j=r\), then \(s_r(j)=0\), so the
branch is unchanged and computes the FFN of \(U_r\). If \(j\neq r\), then
\(s_r(j)=1\), and the negative shift makes all hidden units in that branch
inactive. Hence inactive FFN branches contribute zero, while the active branch
reproduces the FFN computation of \(U_r\). This contributes
\(R \cdot d_{\mathrm{FF}}^U\) depth-specific hidden units, plus \(O(R)\) auxiliary hidden
units for constructing and updating the selector.

For the attention part, we place the attention heads of \(U_1,\ldots,U_R\) in
parallel. As in \cite{saunshi2025reasoning}, but using encoder self-attention,
the selector modifies the keys and queries of branch \(j\) so that inactive
branches, \(j\neq r\), attend to a dummy token. The dummy token is assigned zero
value in the data coordinates, for instance in the \(d\)-dimensional coordinates
corresponding to \(X_i^{(r)}\), and therefore inactive heads contribute zero to
the simulated token representation. For the active branch \(j=r\), the selector
leaves the keys and queries unchanged, so the attention heads reproduce the
attention computation of \(U_r\). Therefore, the attention bank uses \(R \cdot H_U\)
step-specific heads. The shared attention heads belonging to \(S\) are stored only
once, and a constant number of auxiliary heads is sufficient for maintaining the
step and token-marker coordinates.

Third, \(F_{\theta'}\) updates the auxiliary coordinates:
\[
    s_r \mapsto s_{r+1},
    \qquad
    c_r \mapsto c_{r+1},
    \qquad
    a_i \mapsto a_i .
\]
This is a finite-state transition over \(R\) recurrent steps. As in
\cite{saunshi2025reasoning}, the mapping from the counter coordinate to the
selector can be implemented by a small auxiliary MLP of size \(O(R)\). After the final recurrent
step, the selector can be mapped to any fixed terminal value, since the
classification head ignores the auxiliary coordinates.

We now prove correctness by induction over the recurrent step. For \(r=0\), the
embedding layer initializes the data coordinates of \(Z^{(0)}\) to the same
token sequence \(X^{(0)}\) used by the original ViT encoder, and initializes the
selector to \(s_1=\mathbf{1}_R-e_1\). Hence the invariant
\[
    Z_i^{(0)}
    =
    \bigl(
        X_i^{(0)},
        s_1,
        c_1,
        a_i
    \bigr)
\]
holds.

Assume that before recurrent step \(r\), the invariant holds:
\[
    Z_i^{(r-1)}
    =
    \bigl(
        X_i^{(r-1)},
        s_r,
        c_r,
        a_i
    \bigr).
\]
By construction, \(F_{\theta'}\) first applies the shared component \(S\), then
uses the selector \(s_r\) to suppress all depth-specific branches except \(U_r\), and
finally advances the auxiliary state. Therefore:
\[
    F_{\theta'}(Z^{(r-1)})_i
    =
    \bigl(
        U_r(S(X^{(r-1)}))_i,
        s_{r+1},
        c_{r+1},
        a_i
    \bigr).
\]
Since \(B_r = U_r \circ S\), this is:
\[
    F_{\theta'}(Z^{(r-1)})_i
    =
    \bigl(
        X_i^{(r)},
        s_{r+1},
        c_{r+1},
        a_i
    \bigr).
\]
Therefore, the invariant holds after recurrent step \(r\).

By induction, after \(R\) recurrent steps, the data coordinates of the recurrent
encoder equal the output of the original depth-\(R\) ViT encoder:
\[
    \Pi_{\mathrm{data}}
    \left(
        F_{\theta'}^{R}(Z^{(0)})
    \right)
    =
    X^{(R)},
\]
where \(\Pi_{\mathrm{data}}\) denotes projection onto the first \(d\) data
coordinates. The recurrent classification head is chosen to ignore the auxiliary
coordinates and to agree with the original classification head on the data
coordinates. Therefore, for every input image \(I\):
\[
    f_{\theta}(I)
    =
    f_{\theta',R}^{\mathrm{loop}}(I).
\]

The stated size bounds follow directly from the construction. The embedding
dimension contains the original \(d\) data coordinates, \(R\) selector
coordinates  and \(O(1)\) additional auxiliary coordinates for the recurrent-step
encoding and token markers:
\[
    d' = d + R + O(1).
\]
The feed-forward part contains one copy of the shared FFN in \(S\), \(R\) copies
of the depth-specific FFNs in \(U_1,\ldots,U_R\), and \(O(R)\) auxiliary gating units:
\[
    d_{\mathrm{FF}}'
    =
    d_{\mathrm{FF}}^S
    +
    R d_{\mathrm{FF}}^U
    +
    O(R).
\]
The attention part contains one copy of the shared attention heads in \(S\),
\(R\) copies of the depth-specific attention heads in \(U_1,\ldots,U_R\), and a constant number of auxiliary heads for bookkeeping:
\[
    H'
    =
    H_S
    +
    R H_U
    +
    O(1).
\]
This completes the construction.
\end{proof}

This construction explains when recurrent depth sharing can reduce stored
parameters. A standard depth-\(R\) transformer stores a separate copy of both the
shared and depth-specific computations at every layer. If \(|S|\) denotes the
number of parameters in the shared component and \(|U|\) the number of
parameters in a typical depth-specific component, this requires roughly
\[
    R(|S| + |U|)
\]
parameters. In contrast, the recurrent construction stores the common component
once and stores the depth-specific components in a step-conditioned bank:
\[
    |S| + R|U| + \mathrm{overhead}.
\]

Therefore, when much of the computation is shared across depth and the depth-specific parts are compact, recurrent sharing can substantially reduce the number of stored parameters. This perspective is also consistent with the possibility that standard ViTs contain redundant layer specific parameterization, so that not every block requires a fully independent set of projections.

bViT represents an even more restrictive case. It does not contain an explicit bank \(U_1,\ldots,U_R\), nor does it multiply the number of attention heads by the number of recurrent steps. All step-specific behavior must instead be represented implicitly through the interaction between a single shared transformer block and the evolving hidden state. This suggests that bViT can be parameter efficient when the effective depth-specific computations are sufficiently compact. At the same time, it explains why width is important, since the shared block must have enough capacity to encode multiple step-dependent modes.

The rank experiments below test this view. If standard ViT blocks contain compact depth-specific components, their projections should admit low-rank
approximations. Conversely, if bViT represents multiple step-dependent modes inside one shared block, its shared projections should require higher effective
rank and be more sensitive to rank reduction.

\subsection{Rank analysis and sensitivity to low-rank approximation}\label{app:rank}

\begin{figure}[b]
  \centering
  \includegraphics[width=1\linewidth]{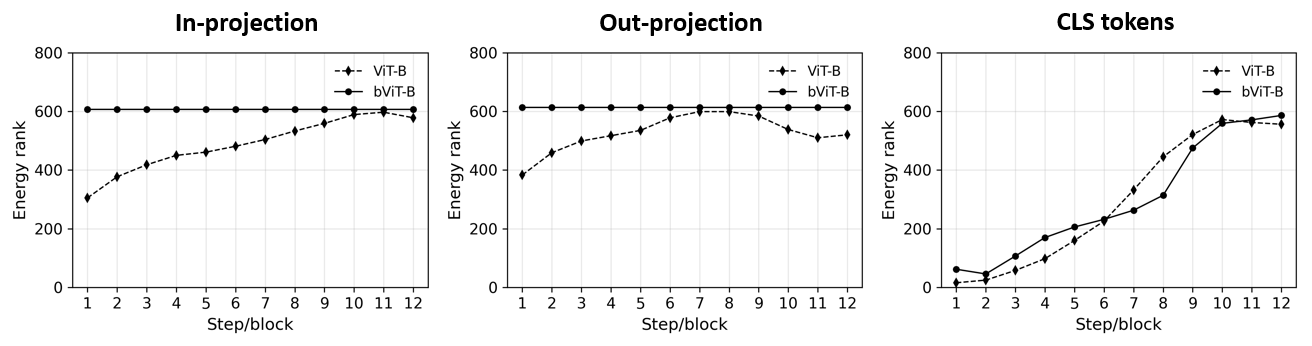}
    \caption{95\% energy rank of the FFN in-projection, FFN out-projection and CLS tokens across blocks in ViT-B and bViT-B. The shared FFN matrices in bViT-B have high rank, while the ranks of the block-specific FFN matrices in ViT-B increase with depth and become comparable in deeper layers.}
    \label{fig:ranks_various}
\end{figure}

Motivated by our experimental results and the described capacity model, we analyzed whether the shared FFN
projections in bViT-B behave as high-rank components. We compared bViT-B and ViT-B, both with embedding dimension 768. For each FFN input and output
projection, we computed the 95\% energy rank using singular value decomposition (SVD), defined as the smallest rank $r$ such that the top-$r$ singular values
account for at least 95\% of the total energy:
\[
r_{0.95} = \min \left\{ r \;:\; \frac{\sum_{i=1}^{r} \sigma_i^2}{\sum_{i} \sigma_i^2} \geq 0.95 \right\}.
\]

In addition, we aggregated the CLS tokens from the validation set after each block and computed the corresponding 95\% energy rank for the resulting token matrix in the same manner.

Figure~\ref{fig:ranks_various} shows that bViT-B utilizes high 95\% energy rank for both the in-projection and out-projection in its FFN, with values around 600. In contrast, ViT-B uses FFN matrices whose ranks increase gradually with depth and eventually become comparable to those of bViT-B in the deeper layers. For the CLS tokens, we observe a similar trend, indicating that the rank of the CLS token representations gradually increases across successive steps and blocks, reaching a similar rank at the outputs of the networks. 

We additionally investigated bViT-B and ViT-B under zero-shot low-rank approximation based on truncated SVD~\cite{zhou2011godec}. To study the effect of low-rank structure, we approximated  the FFN matrices as follows. Let $W = U \Sigma V^\top$ denote the full SVD of a matrix $W$. We then construct a rank-$r$ approximation as:
\[
\mathrm{TruncatedSVD}(W, r) = U_r \Sigma_r V_r^\top,
\]
where $U_r$, $\Sigma_r$, and $V_r$ contain only the top-$r$ singular vectors and singular values. In other words, truncated SVD retains the $r$ dominant singular components of $W$ and removes the remaining ones, producing a low-rank approximation used in our experiments. We evaluated the models for $r$ equal to 64, 128, 256, 384, 512, 640, 702, 744, 760, 768. For bViT-B, we apply this procedure to the shared FFN in-projection and out-projection matrices. For ViT-B, we apply it separately to FFN matrices in each block. After truncation, we evaluate the models on the ImageNet-1K validation set.

Results presented in Fig.~\ref{fig:svd_approximation_rank} show that bViT-B is substantially more sensitive to this form of compression than ViT-B. In bViT-B, reducing the rank below approximately 600 causes a sharp collapse in image recognition performance. In contrast, ViT-B maintains strong performance over a much wider range of ranks. These findings support the view that bViT-B relies on high-rank shared FFN projections, consistent with the interpretation that width is an important capacity resource for recurrent depth sharing.

\begin{figure}
  \centering
  \includegraphics[width=0.6\linewidth]{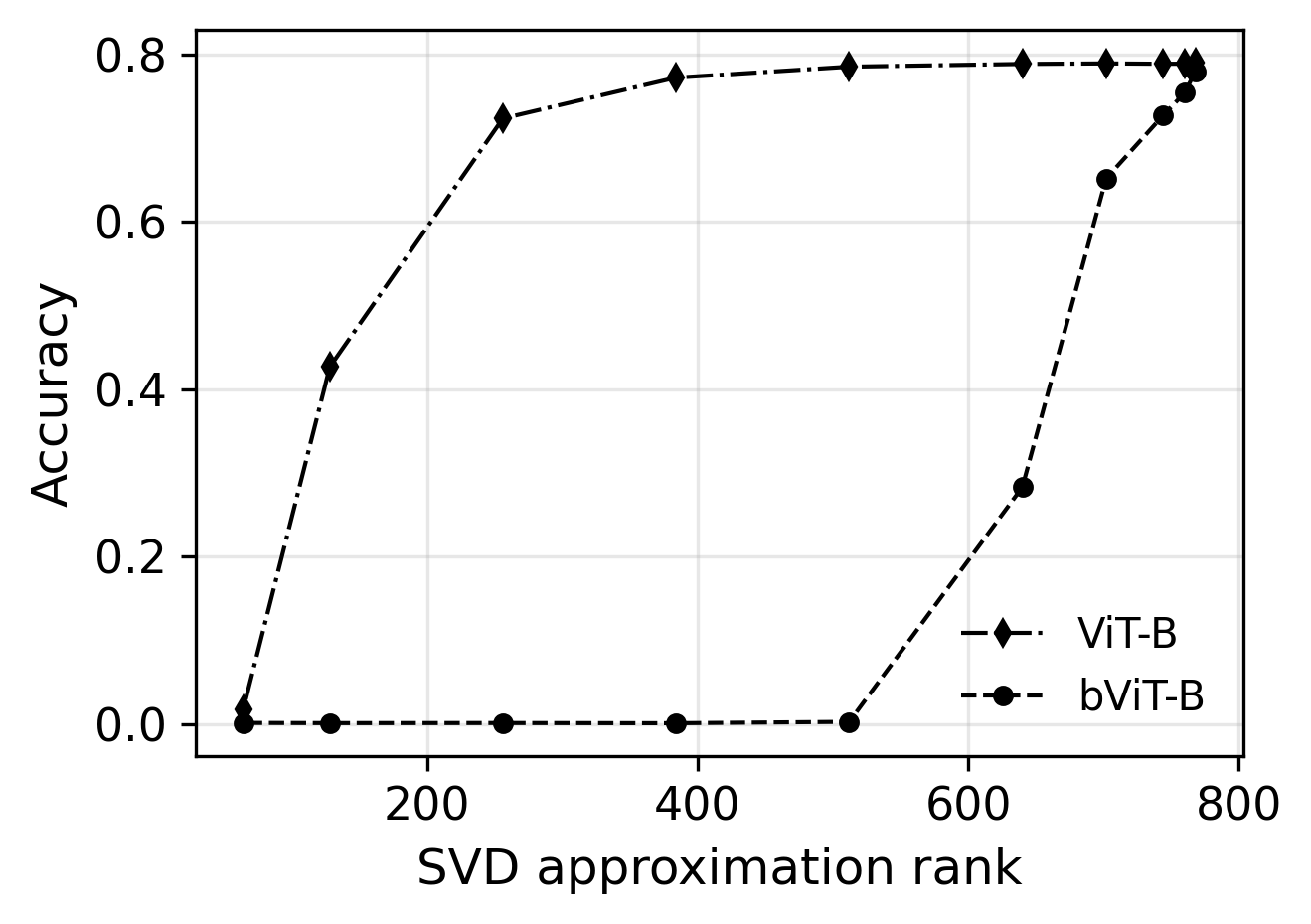}
\caption{ImageNet-1K validation accuracy as a function of truncated-SVD rank applied to the FFN matrices in ViT-B and bViT-B. bViT-B is more sensitive to rank reduction, with accuracy collapsing below rank $\sim 600$, whereas ViT-B remains robust across a wider range of ranks.}
  \label{fig:svd_approximation_rank}
\end{figure}

\clearpage
\section{Early exit}\label{app:early_exit}

A useful property of recurrent models is that computation can be halted at intermediate recurrent steps. In Fig.~\ref{fig:steps_performance}, we evaluate bViT-B on ImageNet-1K using between 1 and 24 recurrent steps. Accuracy increases during the early steps and peaks for 12 iterations corresponding to the maximal number of steps. Continuing inference beyond 12 steps leads to a steady drop in accuracy, showing that bViT-B does not simply converge to a stable representation under repeated application of the shared block.
A comprehensive analysis of the latent space dynamics is provided in Appendix \ref{sec:latent-dynamics}.

\begin{wrapfigure}{r}{7cm}
    \centering
    \vspace{-3mm}
    \includegraphics[width=\linewidth]{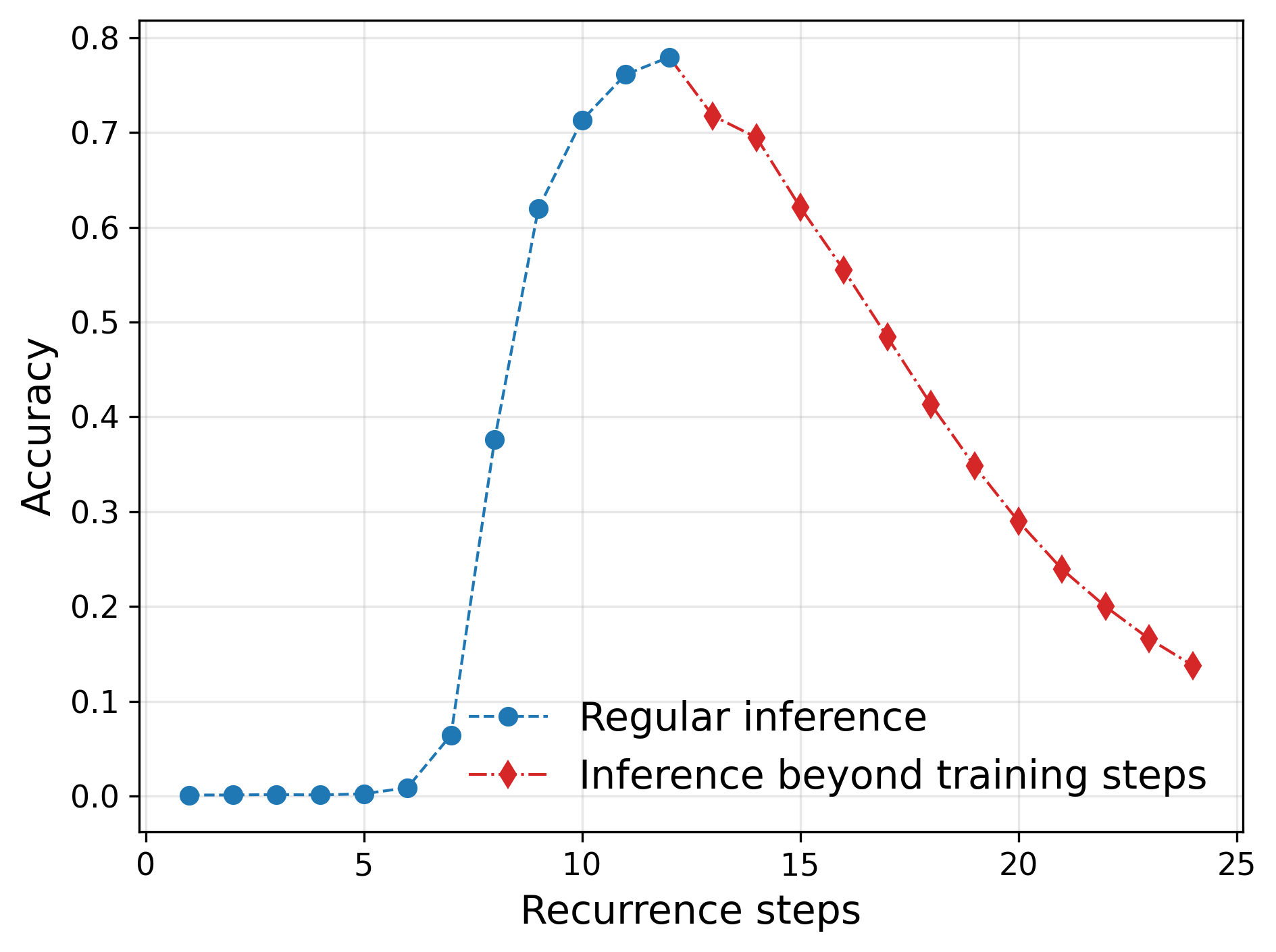}
    \caption{ImageNet-1K validation accuracy of bViT-B across recurrent inference steps. Accuracy peaks near the 12-step training horizon and decreases when the shared block is applied beyond the number of steps used during training.}
    \label{fig:steps_performance}
    \vspace{-2mm}
\end{wrapfigure}

Figure~\ref{fig:steps_performance} suggests that bViT-B does not always need to be evaluated for the full 12 recurrent steps to obtain accurate predictions. We did not train bViT with an explicit learned halting mechanism, since our main goal was to provide a controlled comparison between recurrent and standard ViT architectures. However, because bViT exposes intermediate recurrent states, it is
naturally compatible with early-exit mechanisms that adapt computation to each input. Although bViT reduces the number of stored parameters, its computational cost still depends on the number of recurrent steps. We therefore evaluate whether standard early-exit techniques can reduce bViT inference cost, without introducing a new halting strategy.

The early-exit module is inspired by~\cite{wolczyk2021zero}. We attach $M$ shallow internal classifiers, $\varphi_1,\ldots,\varphi_M$, to intermediate recurrent steps of bViT. Each classifier receives the representation produced at a specific recurrent step and predicts class logits $\hat{y}_i$. Unlike the original method, we do not use contextual information from previous exits, in order to keep the additional parameter overhead small. This design is consistent with the main motivation of bViT, namely reducing the memory footprint while preserving competitive performance.

For a fixed exit $i$, the classifier $\varphi_i$ produces logits $\hat{y}_i$. At inference time, computation is stopped at the first exit whose confidence exceeds a threshold $\tau$. Specifically, a sample exits at step $i$ if: 
\[
\max \operatorname{softmax}(\hat{y}_i) \geq \tau.
\]
Otherwise, the recurrent computation continues to the next step. To make the early exit modules both accurate and well calibrated, we train them with a CE-AUROC objective that combines classification accuracy with a confidence-separation term between correctly and incorrectly classified samples. The loss is defined in the following way: 
\begin{equation}
    L_{\mathrm{ee}} =
    \sum_{i=1}^{M}
    \left[
    (1-\alpha)L_{\mathrm{CE}}(\hat{y}_i, y)
    +
    \alpha
    \frac{1}{N_{\mathrm{pos}}N_{\mathrm{neg}}}
    \sum_{p \in \mathrm{pos}}
    \sum_{q \in \mathrm{neg}}
    \sigma(E(x_q)-E(x_p))
    \right],
\end{equation}
where $y$ is the ground-truth label, $\sigma$ is the sigmoid function and
\[
E(x)=\log \sum_k \exp(\hat{y}_{i,k})
\]
is the confidence score associated with the prediction at exit $i$. The sets $\mathrm{pos}$ and $\mathrm{neg}$ contain correctly and incorrectly classified samples, respectively.

Guided by the step-wise accuracy profile in   Fig.~\ref{fig:steps_performance}, we attach early-exit modules to recurrent steps $i=6,\ldots,12$.  The bViT-B backbone is kept frozen, and only the early-exit classifiers are trained for 50 epochs on top of the pretrained model using Adam with learning rate of $10^{-5}$.  For the ViT-B, we use the same procedure and attach early-exit modules to the corresponding transformer blocks \(i=6,\ldots,12\). At inference time, we evaluate several confidence thresholds $\tau$ to analyze the trade-off between accuracy and computational cost.

Table~\ref{tab:ee_comparison} compares early exit on ViT-B and bViT-B on the ImageNet-1K validation set. For both architectures, decreasing the threshold leads to earlier exits and lower computational cost, at the expense of accuracy. Importantly, bViT-B follows a similar accuracy vs computation trade-off to the standard ViT-B. For example, with $\tau=0.75$, bViT-B achieves an accuracy of 0.776 while reducing the computational cost from 33.70 GFLOPs to 27.32 GFLOPs, corresponding to approximately 19\% savings. The runtime is also reduced from 7.36 ms to 6.19 ms. The overhead of the early-exit classifiers is negligible compared with the full backbone. These results indicate that early-exit mechanisms can be applied to bViT in much the same way as to standard ViTs. Therefore, although bViT does not reduce FLOPs by itself when evaluated for the full recurrent horizon, it remains compatible with dynamic inference strategies that adaptively reduce computation on easier inputs.

\begin{table*}[ht]
\centering
\caption{Performance comparison in terms of accuracy, GFLOPs (GFL), and inference time (ms) on ImageNet-1K using early exit for ViT-B and bViT-B. The threshold $\tau$ controls the confidence required for early exiting, with larger values corresponding to more conservative exits. The setting $\tau=1$ corresponds to no early exit.}
\label{tab:ee_comparison}
\resizebox{\textwidth}{!}{%
\begin{tabular}{l ccc ccc ccc ccc ccc ccc}
\toprule
\textbf{Method}  & \multicolumn{3}{c}{\textbf{$\tau = 0.1$}} & \multicolumn{3}{c}{\textbf{$\tau = 0.25$}} & \multicolumn{3}{c}{\textbf{$\tau = 0.5$}} & \multicolumn{3}{c}{\textbf{$\tau = 0.75$}} & \multicolumn{3}{c}{\textbf{$\tau = 0.9$}} & \multicolumn{3}{c}{\textbf{$\tau = 1$ (no EE)}} \\
\cmidrule(lr){2-4} \cmidrule(lr){5-7} \cmidrule(lr){8-10} \cmidrule(lr){11-13} \cmidrule(lr){14-16} \cmidrule(lr){17-19}
 &  Acc & GFL & Time & Acc & GFL & Time & Acc & GFL & Time & Acc & GFL & Time & Acc & GFL & Time & Acc & GFL & Time \\
 \midrule
 ViT (baseline) &  0.579 & 17.81 & 4.66 & 0.682 & 20.591 & 5.26 & 0.761 & 24.110 & 6.04 & 0.785 & 27.22 & 6.46 & 0.789 & 29.61 & 6.80 & 0.789 & 33.70 & 7.52 \\
 bViT &  0.537 & 18.01 & 4.42 & 0.666 & 20.700 & 4.95 & 0.753 & 24.190 & 5.59 & 0.776 & 27.32 & 6.19 & 0.779 & 29.77 & 6.60 & 0.779 & 33.70 & 7.36 \\
 \bottomrule
 \end{tabular}%
 }
 \end{table*}

\clearpage
\section{Distillation}
\label{app:distillation}

We examined whether bViT can benefit from knowledge distillation from a pre-trained teacher model. Specifically, we followed the distillation-through-attention (DeiT) framework by introducing a distillation token into bViT and applying the hard-label distillation strategy with a pre-trained \texttt{regnet\_y\_16gf} teacher \cite{touvron2021training}. Training was performed on ImageNet-1K using the recipe described in the Methods section.

Our results show that recurrent bViT benefits from distillation, achieving validation accuracy of 0.791 and slightly surpassing bViT-B and ViT-B trained from scratch, which achieved scores of 0.779 and 0.789, respectively. This suggests that large multi-block architectures can act as effective teachers for compact recurrent models, and more broadly, that knowledge distillation is a promising strategy for improving the training of recurrent ViTs.

\clearpage
\section{Additional bViT variants and ablations} \label{bViT-variants}

We evaluated additional recurrent variants of bViT to test whether explicit recurrent mechanisms improve over the simple single-block design used in the main paper. In particular, we consider fast latent updates and cross-step memory skip connections. These variants add extra structure to the recurrent computation, but they do not improve over the base bViT on ImageNet-1K under the evaluated training budget. This suggests that the standard transformer block already provides a strong implicit mechanism for recurrent reuse when the embedding dimension is sufficiently large.

\subsection{Fast latent updates}
\label{sec:bViT-fl}

The first variant, denoted bViT-fl, adds an auxiliary latent variable \(y\) inside each recurrent step. This variant is inspired by recursive architectures that use additional latent computation to propagate information across iterations~\cite{jolicoeur2025less,wang2025hierarchical}. Given the recurrent state \(x_i\), we update the auxiliary latent state for \(N\) inner steps and then combine it with the main state before applying the shared transformer block:
\[
\begin{aligned}
y_{i,0} &= y_i, \\
y_{i,j} &= G(y_{i,j-1} + x_i), \qquad j = 1,\ldots,N, \\
y_i &= y_{i,N}, \\
z_i &= x_i + T_i + y_i, \\
x_{i+1} &= F(z_i).
\end{aligned}
\]
Here, \(F\) is the shared recurrent transformer block, \(G\) is the auxiliary latent update block, and \(T_i\) denotes an optional time-step embedding. We evaluate variants in which \(G\) either shares parameters with \(F\) or is implemented as an independent transformer block. We evaluated several configurations, including full, truncated, and frozen gradient flow through the auxiliary latent state, inner-update counts \(N=1,\ldots,6\).

\subsection{Memory skip connections}
\label{sec:bViT-sc}

The second variant, denoted bViT-sc, introduces an explicit cross-step memory
state, inspired by~\cite{liao2026simple, zeitoun2026hyperloop}. The memory state is updated periodically and then added to the recurrent state before applying the shared block:
\[
\begin{aligned}
y &= G(x_i) \qquad \text{updated every } N \text{ steps}, \\
z_i &= x_i + T_i + y, \\
x_{i+1} &= F(z_i).
\end{aligned}
\]
Here, \(y\) acts as a skip memory across recurrent steps, \(G\) is the memory
update block, and \(T_i\) denotes an optional time-step embedding. This variant
tests whether an explicit memory pathway improves recurrent computation beyond
the information already carried by the hidden state. We evaluated several configurations, including end-to-end and truncated gradient flow through the cross-step memory, memory update intervals \(N \in \{1,2,3,4,6\}\), shared and independent choices of \(G\) and \(F\).

\begin{table}[b]
\centering
\caption{
Representative bViT-B variants on ImageNet-1K. All models use embedding dimension
768, 12 attention heads, one shared transformer block, and 12 recurrent steps,
and are trained for 200 epochs. More complex recurrent variants do not improve
over the base bViT-B.}
\label{tab:bvit_variants_imagenet}
\begin{tabular}{lccc}
\toprule
Model  & Parameters & FLOPs & Val acc  \\
\midrule
bViT & 8.6M & 33.7G & 0.762 \\
bViT-fl  & 8.6M & 134.1G & 0.757 \\
bViT-sc  & 8.6M & 33.7G & 0.693 \\
\bottomrule
\end{tabular}
\end{table}

\subsection{Empirical evaluation}
\label{sec:bvit_variant_eval}

Table~\ref{tab:bvit_variants_imagenet} reports representative ImageNet-1K
results for the base bViT and the two variants. All models follow the bViT-B setting, employing embedding dimension of 768, 12 attention heads, one shared transformer block and 12 recurrent
steps. Models are trained for 200 epochs under the same recipe.

The results show that neither explicit fast latent updates nor cross-step memory skip connections improve over the base bViT. In fact, bViT-fl increases the computational cost substantially, while bViT-sc keeps the same FLOPs but degrades accuracy. These results support the use of the minimal bViT formulation in the main experiments. They suggest that the transformer block already provides
sufficient implicit gating, mixing, and nonlinear capacity for recurrent reuse, and that additional recurrent control structures are not necessary under the evaluated setting.

\clearpage
 \section{Additional recurrent dynamics}\label{sec:latent-dynamics}
 
We present an additional analysis of bViT dynamics, complementing the early-exit results in Appendix~\ref{app:early_exit} and the mechanistic analysis in Section \ref{sec:Interpretability}. We focus on two observations. First,
bViT is not a fixed-point model. Second, we visualize how
representations evolve across recurrent steps in a bViT-B model trained on
CIFAR-10.

\subsection{bViT is not a fixed-point method}
\label{sec:nonconvergent-latent-dynamics}

Our training recipe aligns bViT with a standard ViT by employing a fixed number of recurrence steps, without dynamic halting during training. As a result, bViT should be viewed as a finite horizon recurrent architecture rather than a model optimized
to converge to a stable fixed point. This differs from selected recursive models, such as Tiny Recursive Models \cite{jolicoeur2025less}, which are explicitly trained to approach stable fixed points. Empirically, this distinction is visible in Fig.~\ref{fig:steps_performance}. Accuracy peaks near the 12-step training horizon and decreases when inference is continued beyond the number of recurrent steps used during training. This behavior indicates that additional recurrent iterations are out-of-distribution for the trained model and do not lead to asymptotic convergence.

\subsection{Latent trajectory visualization}
\label{sec:latent-trajectory-visualization}

To qualitatively illustrate recurrent representation dynamics, in Fig.~\ref{fig:dynamics_color_per_step} we visualize the evolution of latent representations for a bViT-B model trained on CIFAR-10. We collect validation embeddings from different recurrent steps and project them to two dimensions using PaCMAP \cite{JMLR:v22:20-1061}. Each plot corresponds to one CIFAR-10 class, with colors indicating the recurrence step.

The latent trajectories suggest a staged recurrent computation. Early iterations produce limited class-discriminative information, consistent with the low accuracy during the first several steps in Fig.~\ref{fig:steps_performance}. Intermediate iterations correspond to the largest improvement in classification accuracy and coincide with the emergence of more object-centered attention patterns in Fig.~\ref{fig:attention_plots}. Later trained iterations further refine the representation and bring performance to its peak near the 12-step training horizon.

\begin{figure}[h!]
  \centering
  \includegraphics[width=1.0\linewidth]{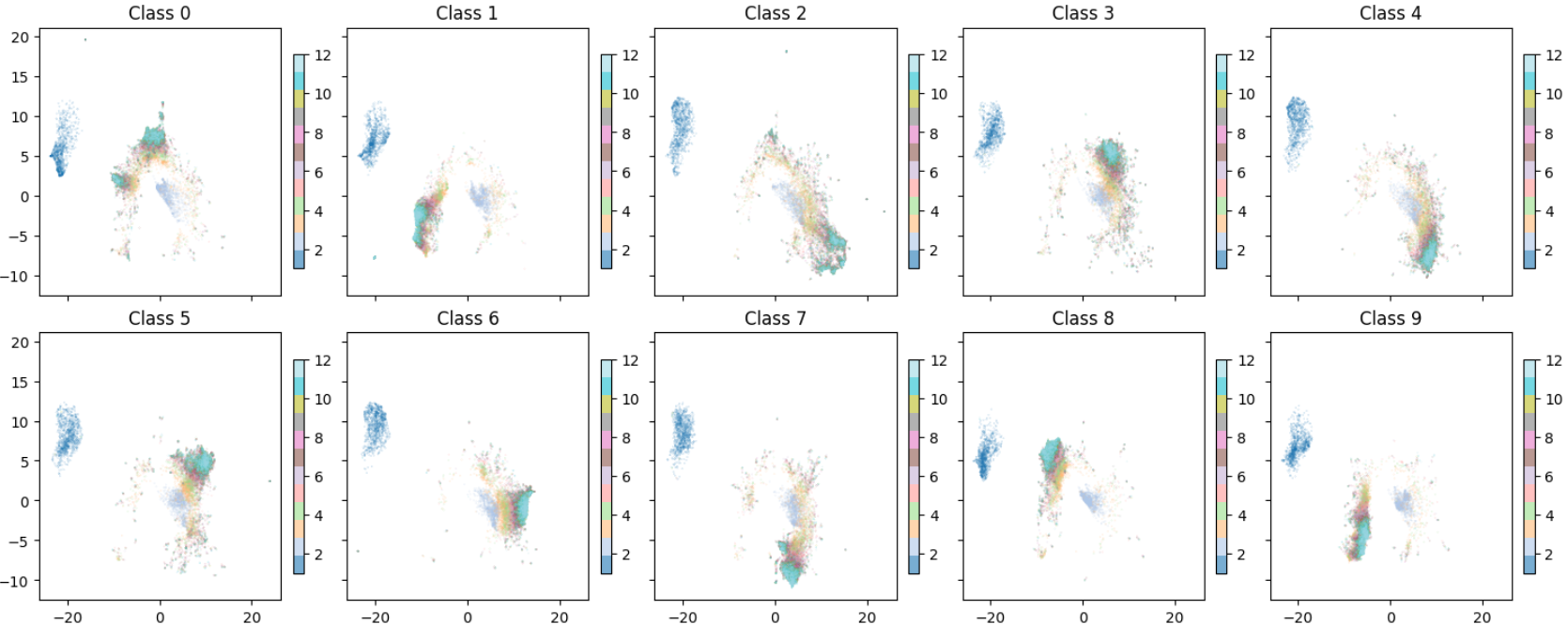}
  \caption{
  PaCMAP visualization of latent trajectories across recurrent steps for
  bViT-B trained on CIFAR-10. Each plot shows validation examples from
  one CIFAR-10 class, with color indicating the recurrent step. The visualization
  qualitatively shows that latent representations evolve across recurrent steps
  and follow class-dependent trajectories in the projected space.}
  \label{fig:dynamics_color_per_step}
\end{figure}

\clearpage
\section{Attention in bViT}
\label{app:attention_heads}

We provide additional results complementing our experiments related to attention patterns in bViT.  Figure~\ref{fig:attention_plots} in the main text reports aggregate attention statistics for {bViT-B}, {bViT-B-TE} and {bViT-B-R}. Here, Figs.~\ref{fig:base_heads}, \ref{fig:te_heads}, and \ref{fig:regi_heads} further detail these results by showing pointing-game scores for each individual attention head across recurrent steps. These plots confirm that the model with registers exhibits stronger and more consistent object localization capabilities across the head population.
In addition, Fig.~\ref{fig:attention_maps} presents qualitative attention maps for the heads with the highest pointing-game scores in each model (head~1 in bViT-B, head~7 in bViT-B-TE, and head~3 in bViT-B-R). These examples illustrate how object grounding evolves over recurrent steps and provide a qualitative complement to the quantitative analysis.

\begin{figure}[h!]
  \centering
  \includegraphics[width=1\linewidth]{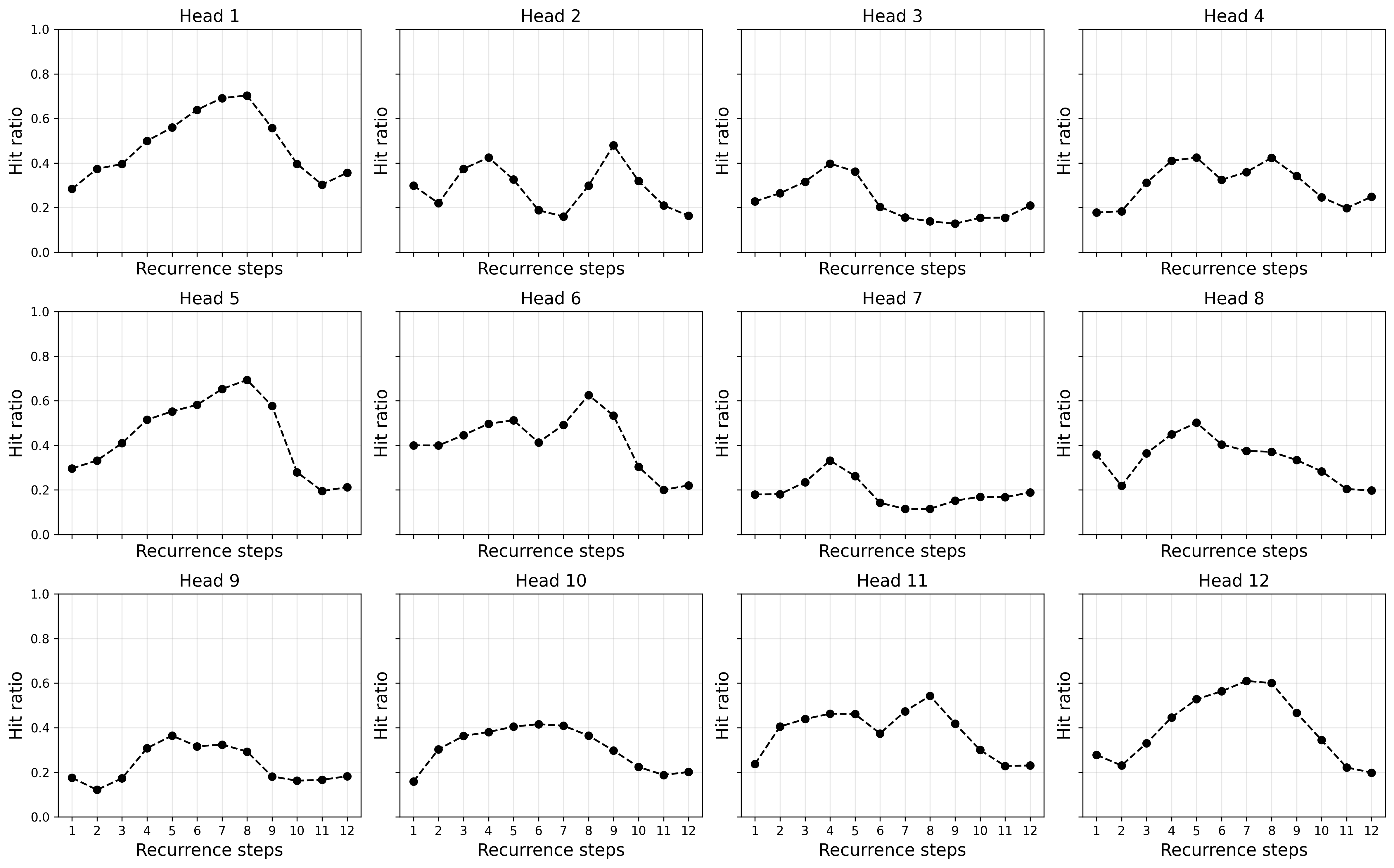}
  \caption{Pointing-game hit ratio for individual attention heads across recurrent steps in bViT-B. Each subplot corresponds to one head. Several heads exhibit improved object grounding during early or intermediate recurrent steps.}
  \label{fig:base_heads}
\end{figure}

\begin{figure}[]
  \centering
  \includegraphics[width=1\linewidth]{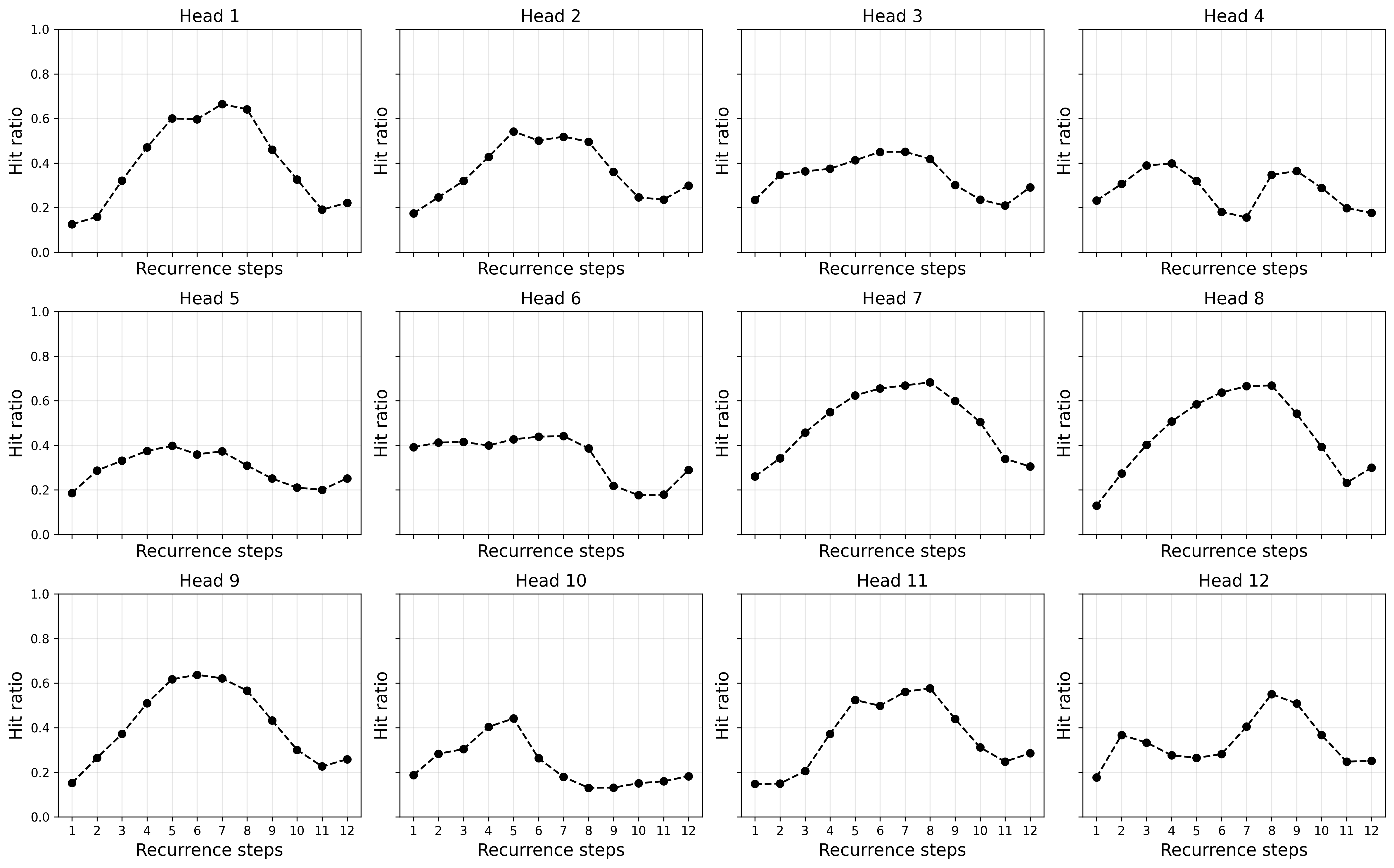}
  \caption{Pointing-game hit ratio for individual attention heads across recurrent steps in bViT-B-TE. Each subplot corresponds to one head. Several heads exhibit improved object grounding during early or intermediate recurrent steps.}
  \label{fig:te_heads}
\end{figure}

\begin{figure}
  \centering
  \includegraphics[width=1\linewidth]{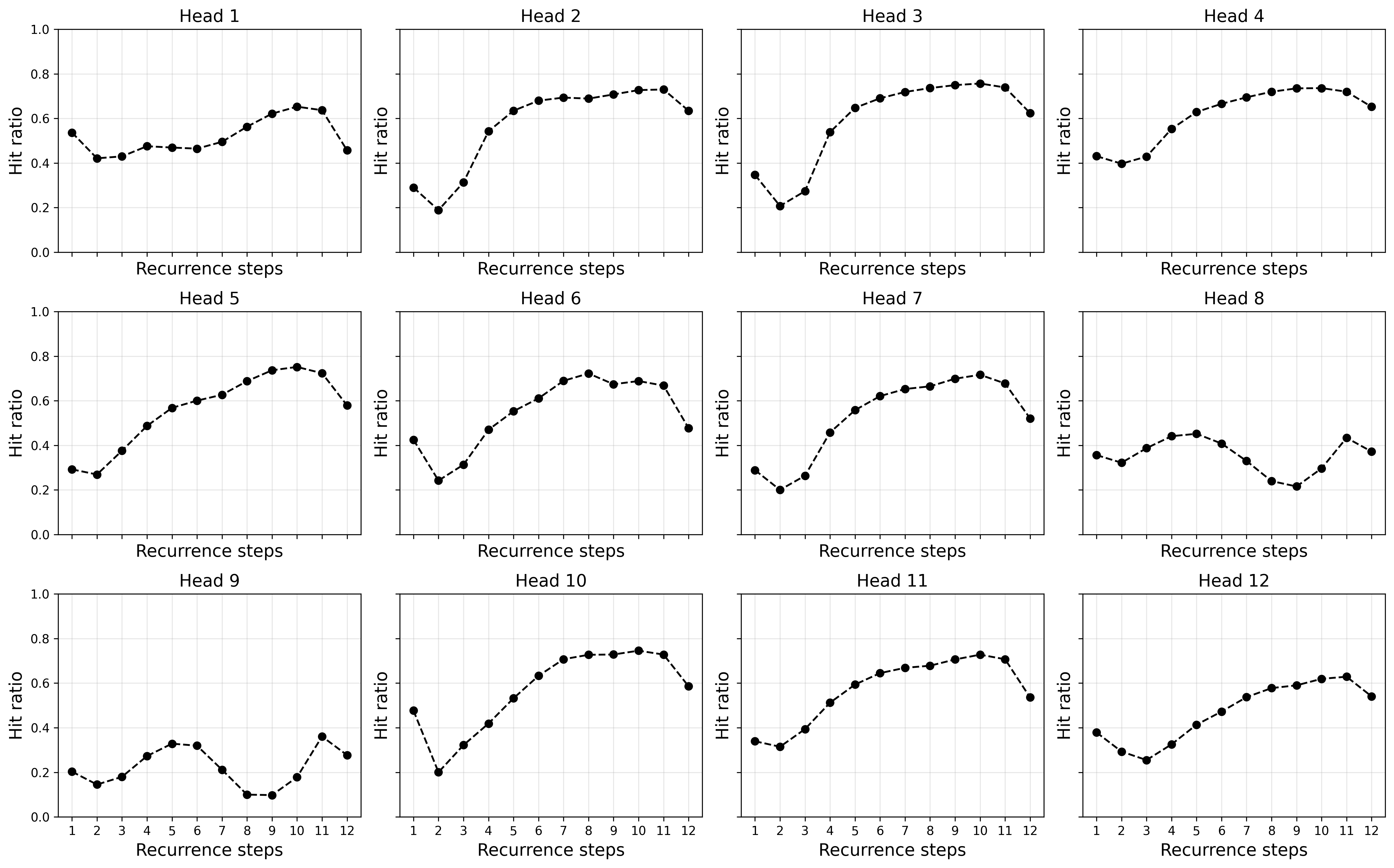}
  \caption{Pointing-game hit ratio for individual attention heads across recurrent steps in bViT-B-R. Each subplot corresponds to one head. Many heads exhibit progressively stronger object grounding over recurrent steps, typically peaking at intermediate or late iterations, while a few remain comparatively weak. This indicates a broad temporal organization of object centered attention across heads. Compared with Fig. \ref{fig:base_heads} and Fig. \ref{fig:te_heads}, the model with registers exhibits stronger and more consistent object localization.}
  \label{fig:regi_heads}
\end{figure}

\begin{figure}
  \centering
  \includegraphics[width=0.8\linewidth]{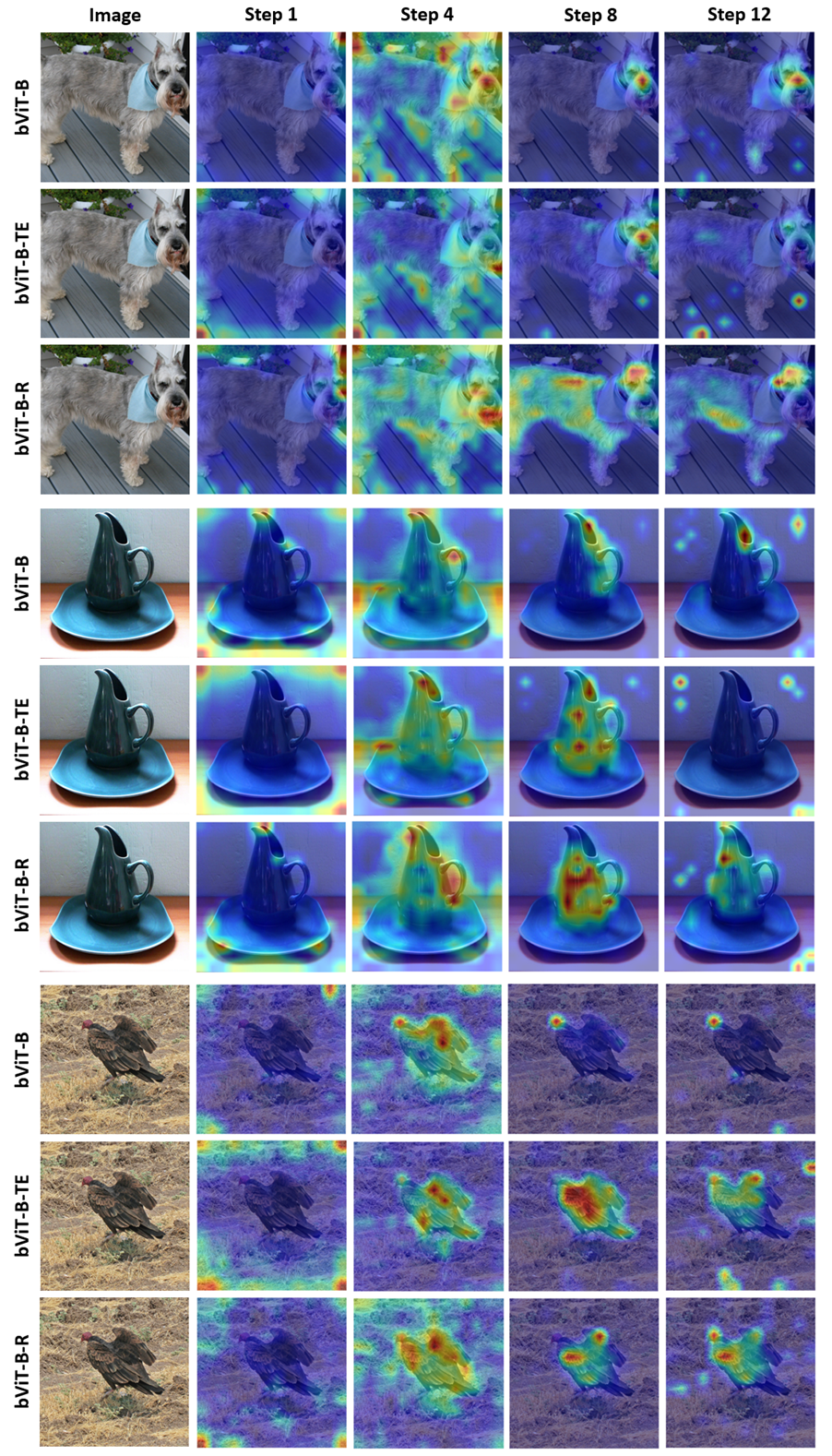}
  \caption{Qualitative comparison of attention maps across recurrent steps for the heads with the highest pointing-game scores. Each block of three rows corresponds to one input image, and rows within each block show bViT-B, bViT-B-TE and bViT-B-R, respectively. Columns correspond to the input image and selected recurrent steps (1, 4, 8, and 12). The maps illustrate how object localization evolves during recurrent computation, often becoming more concentrated on the object at intermediate or late steps.}
  \label{fig:attention_maps}
\end{figure}

\newpage
\section{Neural pathways estimation}
\label{app:rtl}

We provide detailed information on bViT analysis presented in Section \ref{sec:rtl}. We describe the model architecture, training protocol, pruning procedure adapted to recurrent loops, the metrics used to characterize neural pathways, and more detailed experimental results. These details are intended to ensure full reproducibility of our findings on weight sharing and specialization, as well as give additional context on them.

\subsection{Model architecture}
The model uses an embedding dimension of 256, 8 attention heads, and an MLP ratio of 4.0, where hidden dimension of FFN is four times the embedding size. The number of loop steps is 12. For CIFAR-10 images of size $32\times32$, we use a path size of $4\times4$. The activation function is GELU.

Each linear layer in the transformer, including attention QKV projections, attention output projection, and MLP layers, is implemented as a masked linear module. Each such layer stores $N_{\text{masks}}=12$ binary masks, $\{M_1, \ldots M_{12}\}$, one per iteration. During the forward pass at step $t$, the weight matrix $\mathbf{W}$ is element-wise multiplied by the corresponding mask $\mathbf{M}_t$. This enables each iteration to use a distinct sparsity pattern, allowing the emergence of step-specific computational pathways.

\subsection{Training configuration}

All experiments are conducted on CIFAR-10, consisting of $32\times32$ RGB images across 10 classes with separate sets for training/pruning and validation. We use AdamW with weight decay $5\times10^{-2}$ and a batch size of 128. The learning rate follows cosine annealing with linear warmup over 20 epochs.

\subsection{Pruning algorithm}
\label{app:rtl_algorithm}
Standard magnitude pruning assumes a single static subnetwork. In our bViT, however, the same parameters are reused across multiple recurrence iterations. To estimate step-specific importance without interference, we employ RTL-style iterative magnitude pruning \cite{stefanski2026rtl} with, what we called, graft training. We first save the initial weights $\theta_{\text{init}}$. Next, we apply the following procedure for each pruning step $p = 1, \ldots, P$:

\begin{enumerate}
    \item Train the full model for $E$ epochs with all parameters unfrozen using current masks.
    \item For each loop iteration $t\in\{1,\ldots,12\}$:
    \begin{itemize}
        \item Initialize a graft encoder $\theta_{\text{graft}}$ from $\theta_{\text{init}}$.
        \item Apply the current masks.
        \item Freeze the main encoder and train only the graft encoder for $G$ epochs, using it exclusively at loop step $t$ instead of main encoder during forward passes.
        \item Compute weight magnitudes over active parameters (where $M_t = 1$).
        \item Identify the smallest-magnitude $r_\%$ of active weights for pruning.
    \end{itemize}
    \item Update masks by setting selected weights to zero separately for each loop.
    \item Rewind weights to $\theta_{\text{init}}$ while preserving updated masks.
\end{enumerate}

In our experiments, we used $P=30$ pruning steps, $E=200$ full model training epochs, $G=100$ graft training epochs, and $r_\%=20\%$ pruning threshold.

\subsection{Analysis metrics}
To analyze the learned pathways, we compute overlap and specialization metrics based on extracted binary masks.

\paragraph{Jaccard similarity.} We measure overlap between two masks $M_i$ and $M_j$ as:
\begin{equation}
        J(M_i, M_j) = \frac{\left|M_i\cap M_j\right|}{\left|M_i\cup M_j\right|}.
\end{equation}
We compute pairwise similarities within each layer type and report size-weighted averages.

\paragraph{Specialization ratio.} We quantify how unique the weights of step $t$ are relative to others using:
\begin{equation}
    Spec(M_t)=\frac{\left|M_t\setminus\bigcup_{o\neq t}M_o\right|}{\left|M_t\right|},
\end{equation}

where a value of 1 indicates weights used exclusively in step $t$.

\subsection{Sparsity schedule}
As stated in appendix \ref{app:rtl_algorithm} we use an aggressive pruning schedule to induce structural specialization, removing 20\% of remaining weights per step over 30 steps reaching approximately 99.9\% sparsity. Importantly, sparsity is defined with respect to the binary masks rather then the underlying shared weights. Since each iteration maintains an independent mask and a large fraction of weights become step-specific, the effective number of weights participation in computation remains significantly higher than what mask-level sparsity alone would suggest.

To illustrate this distinction, Table \ref{tab:sparsity_comparision} reports both mask-level sparsity and the corresponding effective sparsity with respect to the underlying weight tensor for selected pruning step.

\begin{table}[h]
    \centering
    \caption{Comparison between mask-level sparsity and effective weight-level sparsity.}
    \begin{tabular}{ccc}
    \hline
    Pruning step & Sparsity (mask-level) & Sparsity (weight-level) \\
    \hline
         5 & 67.2\% & 23.0\% \\
        10 & 89.3\% & 55.5\% \\
        15 & 96.5\% & 79.4\% \\
        20 & 98.8\% & 90.6\% \\
        25 & 99.6\% & 95.7\% \\
        30 & 99.9\% & 98.2\% \\
        \hline
    \end{tabular}
    \label{tab:sparsity_comparision}
\end{table}

For the analysis presented in the main text, we consider pruning steps up to step 13 inclusively. Beyond this point, the model exhibits increased instability, and the estimated neural pathways become less reliable. For completeness, we report results across all pruning steps in this appendix.

\subsection{Additional results}
\label{app:rtl_additional_results}

In this section, we provide a detailed view of pruning dynamics underlying the pathway analysis presented in the main text. Specifically, we report the full, uncropped pruning trajectories without aggregating across steps, along with additional diagnostics including model accuracy and pairwise Jaccard similarity between masks. These plots allow for a more fine-grained examination of how recurrent pathways emerge and evolve under increasing sparsity. Figure \ref{fig:rlt_app} summarizes how step-conditioned RTL pruning reshapes the bViT as sparsity increases.

\begin{figure}[h]
  \centering
  \includegraphics[width=1\linewidth]{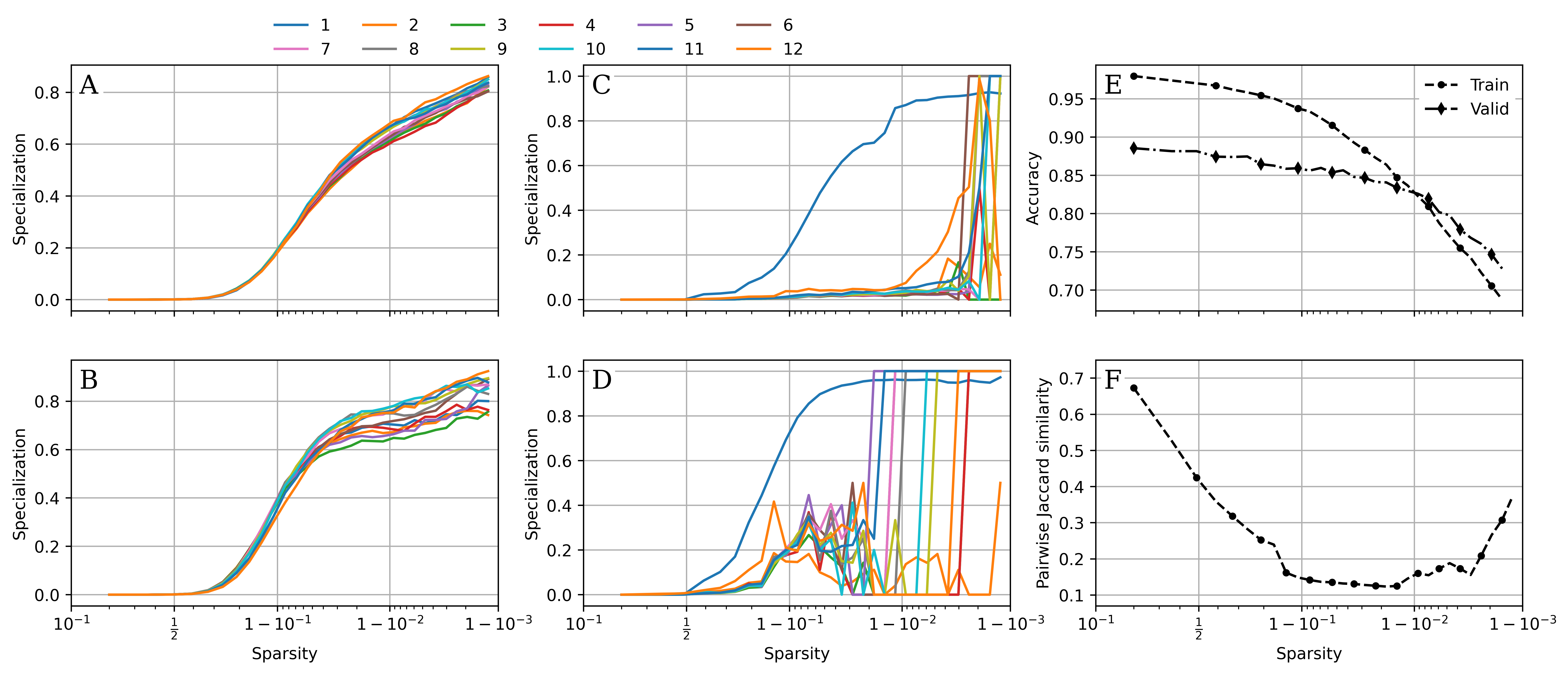}
  \caption{Pruning dynamics across recurrent steps. (A-D) Specialization per step (1-12) as a function of sparsity for different components: (A) MHSA, (B) MHSA projection, (C) in-projection FFN layer, and (D) out-projection FFN layer. (E) Training and validation accuracy under increasing sparsity. (F) Average pairwise Jaccard similarity between pruning masks.}
  \label{fig:rlt_app}
\end{figure}

A key aspect of the following analysis is the interpretation of specialization in the FFN layers. In these components, specialization exhibits abrupt jumps to 1.0. These spikes do not reflect increasing step-specific computation, rather, they indicate that the corresponding layer has been fully pruned for that step. In such cases, the specialization metric becomes ill-defined due to division by zero and the value of 1.0 is used as a visual indicator of this condition. Consequently, these events should be interpreted as the effective removal of the layer from the computational graph for the given step, rather than as extreme specialization. 

Across all components, specialization increases as the model is pruned more aggressively, but both the rate and structure of this process depend strongly on the architectural block. Panels A-D in Fig.~\ref{fig:rlt_app} report specialization for the 12 recurrent steps separately for each component. In the MHSA block (Fig.~\ref{fig:rlt_app} A), specialization increases smoothly and relatively uniformly across steps, with limited separation between them. The MHSA projection block (Fig.~\ref{fig:rlt_app} B) follows a similar trend but exhibits a wider spread, indicating stronger step-dependent variation. In both cases, specialization becomes more pronounced after approximately 90\% sparsity, with later steps specializing faster than earlier ones. This effect is particularly visible when comparing steps 1-6 (solid lines) to steps 7-12 (dashed lines), where the latter group diverges more rapidly.

In contrast, the two FFN layers exhibit markedly different and less uniform behavior. In the first in-projection FFN layer (Fig.~\ref{fig:rlt_app} C), most steps remain largely unspecialized over a substantial portion of the pruning trajectory. However, the first step specializes significantly and much earlier then the others. The second out-projection FFN layer (Fig.~\ref{fig:rlt_app} D) shows an even more pronounced effect: specialization appears earliest and the most strongly in the first step, followed by irregular and abrupt transitions across remaining steps. 

Overall, the emergence of specialization follows a consistent ordering across components: it becomes visible first in the first step of second FFN layer, followed by MHSA projection across steps, then the first step of first FFN layer, and finally the MHSA block. Outside of these effects, many steps remain relatively unspecialized until late in the pruning process.

These structural changes are closely tied to model performance. As shown in panel Fig.~\ref{fig:rlt_app} E, training and validation accuracy remain relatively stable over much of the pruning trajectory, but begin to decline more rapidly at sparsity above approximately 98\%. This transition coincides with a change in mask structure captured in panel Fig.~\ref{fig:rlt_app} F. The average pairwise Jaccard similarity between step-specific masks increases steadily at intermediate sparsities, indicating increasing differentiation between steps. However, at very high sparsity, similarity begins to increase, suggesting that the model starts to rely more heavily on a reduced set of shared weights. The onset of this increase aligns with the observed expedited drop in validation accuracy, indicating that excessive pruning limits the model's ability to maintain distinct computational pathways.

For clarity of analysis in the main text, we restricted the discussion to the first 13 pruning steps. The endpoint of this range corresponds to the most stable and reliable point for neural pathway estimation across multiple indicators. In particular, the pairwise Jaccard similarity (panel Fig.~\ref{fig:rlt_app} F) is within its most stable regime, indicating a balanced coexistence of shared and step-specific weights. At the same time, validation accuracy (panel Fig.~\ref{fig:rlt_app} E) remains high (above 0.85) and degrades only gradually. Additionally, the specialization of early steps in the FFN layers stabilizes. Beyond this point, the model transitions into a less stable regime, most clearly visible in panel Fig.~\ref{fig:rlt_app} D, where frequent jumps to full pruning in FFN layers indicate abrupt structural changes. Taken together, these signals suggest that this point provides the most consistent and interpretable estimate of the underlying neural pathways, motivating the choice of step 13 as the cutoff for analysis in the main text.

Taken together, these results show that the estimated neural pathways in recurrent bViT exhibit a structured balance between specialization and sharing. Different components and steps specialize at different rates, while a subset of shared weights remains consistently active across steps. This suggests that the underlying computation relies on a coordinated interplay between step-specific pathways and a shared backbone, as revealed by the pruning-based analysis.

\clearpage
\section{Transfer learning}
\label{app:transfer}

We compared transfer learning capabilities of ViT-B and bViT-B by evaluating three transfer learning techniques: linear probing, full fine-tuning, and LoRA based parameter-efficient fine-tuning. For each method, we initialized the models from  pretrained checkpoints and replaced the original classification heads with newly initialized linear layers matching the number of classes in the target dataset. The datasets used for our transfer learning experiments are summarized in Table~\ref{tab:tl_appendix}.

In the linear probing setting, the pretrained backbone was frozen and only the newly initialized classification head was optimized. This protocol evaluates the quality of the pretrained representations with the base model serving as feature extractor. In the full fine-tuning setting, the classification head and the pretrained backbone were optimized jointly, allowing all main model components to adapt to the target task. We also considered LoRA based fine-tuning, where the pretrained backbone was frozen and adaptation was performed by training low-rank adapters together with the classification head. We evaluated two LoRA variants: one with adapters inserted into the query and value projections of the self-attention layers and another with adapters inserted into the feed-forward network layers. 

For the experiments, we approximately followed setups described in recent efficient transfer learning literature~\cite{hu2022lora,liuni}. For all transfer learning experiments, we optimized the models with AdamW using the cross-entropy loss. Training was performed for 40 epochs with a batch size of 128 for CIFAR10 and CIRAR 100, and a batch size of 64 for the remaining datasets. We used a linear warmup for the first 5 epochs followed by cosine learning-rate decay. For the linear probing, we set the learning rate to $1\times 10^{-3}$ and weight decay of $0.001$. For full fine-tuning, we used a learning rate of $5\times 10^{-5}$, a minimum learning rate of $1\times 10^{-6}$, and weight decay of $0.05$. In this setting, all main backbone components were trainable, including the attention layers, feed-forward layers, patch embedding, positional embedding, CLS token, normalization layers and step embeddings (if present). For LoRA fine-tuning, both adapter variants used the same hyperparameters, including rank $r=8$, scaling factor $\alpha=8$, and no LoRA dropout. Both variants were trained with a learning rate of $5\times 10^{-4}$, a minimum learning rate of $5\times 10^{-6}$, and weight decay of $10^{-4}$. The remaining backbone parameters, including normalization layers and step embeddings, were kept frozen, while the classification head was trained jointly with the LoRA parameters. For the evaluations, we reported the accuracy mean and standard deviation computed over a 3 training runs. Experiments were conducted on a single H100 GPU. 

\begin{table}[h!]
\centering
\caption{Datasets used for transfer learning experiments.}
\label{tab:datasets}
\begin{tabular}{lrrr}
\toprule
Dataset & Train size & Test size & \#classes \\
\midrule
CIFAR-10~\cite{Krizhevsky09learningmultiple}            & 50,000    & 10,000 & 10 \\
CIFAR-100~\cite{Krizhevsky09learningmultiple}             & 50,000    & 10,000 & 100 \\
Oxford-IIIT Pets~\cite{parkhi2012cats}   &   3,680     &  3,669    & 37 \\
Stanford Cars~\cite{krause20133d} & 8,144     & 8,041  & 196 \\
DTD~\cite{cimpoi14describing} &  1,880   &  1,880 &  47 \\
Flowers-102~\cite{nilsback2008automated}     & 2,040     & 6,149  & 102 \\

\bottomrule
\label{tab:tl_appendix}
\end{tabular}
\end{table}

\end{document}